\documentclass{article}

\usepackage{microtype}
\usepackage{graphicx}
\usepackage{subcaption}
\usepackage{booktabs} 

\usepackage{hyperref}

\usepackage{scrlfile}
\PreventPackageFromLoading{algorithm}
\PreventPackageFromLoading{algorithmic}

\usepackage[preprint]{icml2026}

\usepackage{amsmath}
\usepackage{amssymb}
\usepackage{mathtools}
\usepackage{amsthm,thm-restate}

\usepackage[ruled,vlined]{algorithm2e}

\usepackage[capitalize,noabbrev]{cleveref}

\theoremstyle{plain}
\newtheorem{theorem}{Theorem}[section]
\newtheorem{proposition}[theorem]{Proposition}

\theoremstyle{definition}
\newtheorem{definition}[theorem]{Definition}

\theoremstyle{remark}
\newtheorem{remark}[theorem]{Remark}

\usepackage[textsize=tiny]{todonotes}

\usepackage{xpunctuate}
\usepackage{enumitem}
\usepackage{stmaryrd}
\usepackage{adjustbox}

\providecommand{\cond}{\,\vert\,}                               
\DeclarePairedDelimiter{\norm}{\lVert}{\rVert}                  
\DeclarePairedDelimiter{\iverson}{\llbracket}{\rrbracket}       
\DeclarePairedDelimiterX{\scalp}[2]{\langle}{\rangle}{#1,#2}    
\DeclarePairedDelimiterX{\infdivx}[2]{(}{)}{%
  #1\;\delimsize\|\;#2}
\newcommand{\kldiv}{D_{KL}\infdivx}                             
 
\newcommand{\eg}{e.g\xperiod}

\newcommand{\wrt}{w.r.t\xperiod}
\newcommand{\ie}{i.e\xperiod}
\newcommand{\IE}{I.e\xperiod}
\newcommand\etc{etc\xperiod}

\usepackage{etoolbox} 
\makeatletter
\patchcmd{\@algocf@start}{-1.5em}{0em}{}{} 
\makeatother
\SetAlCapHSkip{0pt} 
\SetAlgoHangIndent{0pt}
\SetInd{0.7em}{0.7em}
\newcommand{\BlockHeading}{}                    
\SetKwBlock{GenericBlock}{\BlockHeading}{}     
\newcommand{\VarBlock}[2]{%
  {\renewcommand{\BlockHeading}{\textnormal{#1}}%
   \GenericBlock{#2}%
  }%
}
\SetKwFor{RepeatIndef}{repeat}{}{end}
\SetKwFor{RepTimes}{repeat}{times}{end}
 
\newtheorem{example}{Example}

\icmltitlerunning{Deep Multivariate Models with Parametric Conditionals}

\begin{document}

\twocolumn[
  \icmltitle{Deep Multivariate Models with Parametric Conditionals}



  \icmlsetsymbol{equal}{*}

  \begin{icmlauthorlist}
    \icmlauthor{Dmitrij Schlesinger}{tud}
    \icmlauthor{Boris Flach}{ctu}
    \icmlauthor{Alexander Shekhovtsov}{ctu}
  \end{icmlauthorlist}

  \icmlaffiliation{tud}{Dresden University of Technology}
  \icmlaffiliation{ctu}{Czech Technical University in Prague}
  
  \icmlcorrespondingauthor{Dmitrij Schlesinger}{dmytro.shlezinger@tu-dresden.de}
  

  \vskip 0.3in
]



\printAffiliationsAndNotice{}  

\begin{abstract}
  We consider deep multivariate models for heterogeneous collections of random variables. In the context of computer vision, such collections may \eg consist of images, segmentations, image attributes, and latent variables. 
  When developing such models, most existing works start from an application task and design the model components and their dependencies to meet the needs of the chosen task. 
  This has the disadvantage of limiting the applicability of the resulting model for other downstream tasks. Here, instead, we propose to represent the joint probability distribution by means of conditional probability distributions for each group of variables conditioned on the rest.  Such models can then be used for practically any possible downstream task. Their learning can be approached as training a parametrised Markov chain kernel by maximising the data likelihood of its limiting distribution. This has the additional advantage of allowing a wide range of semi-supervised learning scenarios.
\end{abstract}

\section{Introduction}
When modelling joint probability distributions, say $p(x,y)$, in terms of deep networks, usually, the first question is how to factorise it, \ie to decompose it into parts which are then modelled by deep networks. The choice of this decomposition is  almost always dictated by the intended inference problem. 
For example, if $x$ is observable, $y$ is a class label, and the intended task is classification, the appropriate factorisation is $p(x,y) = p(x)p(y\cond x)$. If, on the other hand, the intended task is conditional generation, the reverse factorisation $p(x,y) = p(y)p(x\cond y)$ is suitable. The crucial point is that there is no single factorisation that enables solving both inference problems. If \eg $p(x\cond y)$ is modelled in terms of a deep network, the reverse conditional $p(y\cond x)$ is not accessible even in the case of a simple model for $p(y)$. 

The issue becomes even more difficult for more than two variables, \eg $p(x,y,z)$, because the number of possible factorisations grows exponentially with the number of variables. Moreover, a particular factorisation admits only a few inference problems. For example, the factorisation $p(x,y,z) = p(x)p(y\cond x)p(z\cond x,y)$ admits predicting $y$ from $x$, but not $y$ from $x$ and $z$. Performing the latter requires the conditional $p(y\cond x,z)$, which is not directly modelled in the chosen factorisation. Its computation from the learned factors is usually infeasible if the latter are modelled by complex deep networks. 
A typical example is variational autoencoders (VAE) \citep{Kingma:ICLR14,Rezende:ICML14} and conditional VAEs \citep{Sohn:NIPS2015, Kingma-14}. They establish a (statistical) relation between images, classes and latent variables by choosing a factorisation suitable for the respective main goal (\eg semi-supervised model learning for conditional generation and inference). Tasks that are not directly accessible by the chosen model factorisation  are then delegated to auxiliary encoders, which are used for lower bounding the training objective without being part of the model. This has the disadvantage that they are not consistent with the model and often require additional terms in the learning objective.  

Our main desiderata are therefore to model complex multivariate joint probability distributions such that they can be (i) used for various downstream inference tasks and (ii) trained with arbitrary levels of supervision. For this, we propose to represent joint probability distributions by means of conditional probability distributions for each group of variables conditioned on the rest. For the example above, we would represent $p(x,y,z)$ by means of three conditionals $p(x \cond y,z)$, $p(y \cond x,z)$ and $p(z \cond x,y)$. The joint distribution $p(x,y,z)$ is then defined as the limiting distribution of an MCMC process that alternately samples from the given conditionals. We propose an algorithm that learns the conditional distributions by maximising the data likelihood of the limiting distribution while encouraging them to be consistent with each other. Inference according to any conditional distribution, \eg $p(x\cond y)$, becomes accessible via the MCMC process with clamped inputs.

The rest of the paper is organised as follows. \cref{sec:related} reviews the related work. In \cref{sec:limiting} we consider learning limiting distributions defined by parametrised MCMC transition probabilities in a general setting, without partitioning variables into groups, and propose a recurrent lower bound approach. \cref{sec:multivar} details the approach for the case of multivariate joint probability distributions, shows a connection between consistency and detailed balance, and proposes a learning algorithm applicable with arbitrary levels of supervision. 
The experimental \cref{sec:experiments} validates the properties of the learning algorithm and proposes proof-of-concept experiments in semi-supervised learning and multiple direction inference on vision-related datasets.

\section{Related Work}\label{sec:related}
We develop the idea of specifying a joint distribution by its conditionals. 
In the context of VAEs, the decoder $p(x \cond z)$ and encoder $q(z \cond x)$ may be {\em consistent}, \ie true conditionals of some joint distribution $p(x,z)$, implicitly defined in this way. The prior $p(z)$ is then already implied by the joint. An explicitly specified simple $p(z)$ imposes an additional constraint. Towards closing this discrepancy, \citet{Tomczak-17} proposed VampPrior, defined with the help of the encoder, and showed that it is beneficial for generative capabilities and a better latent structure.
On the other hand, if the encoder and decoder are in exponential families --- a common choice in VAEs, then only RBM-like joint distributions may satisfy consistency~\citep{Arnold-1991,Arnold-2001,Shekhovtsov:ICLR2022}. \citet{Liu-21} circumvent this issue by using a flow-based model for the encoder and propose a consistency-enforcing loss as well as ways to sample from model marginals {\em assuming that consistency holds}. However, their approach is only applicable in the bivariate case. We propose instead learning the limiting distribution specified by conditional models. 
To do this, we construct a lower bound on the likelihood target that implicitly enforces consistency. The resulting approach generalises well to the multivariate case. We expect that multivariate models with more variables, in particular hierarchical, can combine consistency with expressive power. Unlike \citet{Liu-21}, our approach is applicable in such cases.

\citet{Flach-24} circumvent the consistency of VAE (with or without prior) by formulating the learning as a two-player game between an encoder and decoder. In the simple case of unsupervised learning, this leads to the wake-sleep algorithm \citep{Hinton-95}. Our \cref{alg:multi} is a generalisation: it recovers these previous algorithms in the case of two variables when terminating the MCMC chain at length 1. The chain length allows for balancing consistency vs likelihood trade-off, previously possible only with more complex approaches \citep{Liu-21}.

\citet{Lamb-17} also consider learning a limiting distribution of a Markov Chain cycling through conditionals, called {\em GibbsNet}. Their training approach is, however, different from ours: using a GAN, it matches the limiting joint distribution to the distribution obtained by clamping the observed variables to the data distribution. They claim that it can, in principle, be applied to solve multiple inference tasks and admits semi-supervised learning. However, both the analysis and experiments are limited to just two groups of variables: observed and latent.

\section{Learning Limiting Distributions} \label{sec:limiting}
  Let $p_\theta(x \cond x')$, $\theta\in\Theta$, $x,x'\in \mathcal{X}$, be a parameterised transition probability matrix defining a Markov chain with a possibly large but finite state space $\mathcal{X}$. We assume that it has a unique limiting distribution\footnote{\IE when some power of the transition probability matrix is strictly positive.} denoted by $p_\theta(x)$. Given a data distribution $q(x)$, we want to train the implicit model $p_\theta(x)$ by maximising the likelihood 
\begin{equation}
 L(\theta) = \sum_{x\in\mathcal{X}} q(x) \log p_\theta(x) \rightarrow \max_{\theta\in\Theta} .
\end{equation}
We assume that sampling from $p_\theta(x \cond x')$ and computing its derivative \wrt $\theta$ are both possible. However, this does not yet allow us to maximise the objective $L(\theta)$ directly. To approach the task, we propose to use a lower bound similar to variational inference.

Let $q(x \cond x')$ be another Markov chain model on $\mathcal{X}$ with a unique limiting distribution. Starting from the given data distribution, \ie $q_1(x) \coloneqq q(x)$, we define the following two sequences of distributions $q_k(x)$ and $Q_k(x)$, $k=2,3,\ldots$ recursively:
\begin{subequations}
\begin{align}
 q_k(x) &= \sum_{x'\in\mathcal{X}} q(x \cond x') q_{k-1}(x') \\
 Q_n(x) &= \frac{1}{n} \sum_{k=1}^n q_k(x).
\end{align}
\end{subequations}
Both sequences converge to the unique limiting distribution of the model $q(x \cond x')$. Using the {\em stationarity condition} $p_\theta(x) = \sum_{x'}p_\theta(x')p_\theta(x\cond x')$ for the limiting distribution $p_\theta(x)$, we can expand and lower bound $L(\theta)$ as follows:
\begin{align}
  & L(\theta)  = 
  \sum_{x\in\mathcal{X}} q_1(x) \log \sum_{x'\in\mathcal{X}} 
  p_\theta (x \cond x' ) p_\theta(x') \geqslant\nonumber \\
  & L_B(\theta, q) = 
   \sum_{x\in\mathcal{X}} q_2(x) \log p_\theta(x) + \nonumber \\ 
  & \sum_{x, x'\in\mathcal{X}} q(x' \cond x) q_1(x) 
  \Bigl[
  \log p_\theta(x \cond x') - \log q(x' \cond x) \Bigr].
 \end{align}
 Repeating this step for the first term of $L_B(\theta, q)$ recursively, we obtain an $n$'th-order lower bound
\begin{align} \label{eq:chain}
 & L_B(\theta, q, n) = \sum_{x\in\mathcal{X}} q_{n+1}(x) \log p_\theta(x) +\\
 & \ \ n\hspace{-.4em} \sum_{x, x'\in\mathcal{X}} q(x' \cond x) Q_n(x) 
 \Bigl[\log p_\theta(x \cond x') - \log q(x' \cond x) \Bigr] . \notag
\end{align}
To analyse this lower bound, we use the following reformulation.
\begin{restatable}{proposition}{PKL}\label{prop:KL}
The lower bound~\eqref{eq:chain} has an equivalent reformulation $L_B(\theta, q, n) =$
  \begin{align} \label{eq:chain_kl}
    L(\theta) - n \sum\limits_{x\in\mathcal{X}} Q_n(x) \kldiv*{q(x' \cond x)}{p^r_\theta(x' \cond x)},
   \end{align}
  where $p^r_\theta(x' \cond x) = p_\theta(x \cond x')p_\theta(x')/p_\theta(x)$ is the transition kernel of the reverse Markov chain.
  \end{restatable}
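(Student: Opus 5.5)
The plan is to substitute the definition of the reverse kernel into the KL term of \eqref{eq:chain_kl}, expand, and show that what remains coincides with \eqref{eq:chain}. Since $p_\theta(x \cond x') = p^r_\theta(x' \cond x)\,p_\theta(x)/p_\theta(x')$, the bracket in \eqref{eq:chain} splits as
\begin{equation*}
\log p_\theta(x \cond x') - \log q(x' \cond x) = \log \frac{p^r_\theta(x' \cond x)}{q(x' \cond x)} + \log p_\theta(x) - \log p_\theta(x').
\end{equation*}
Multiplying by $q(x' \cond x)\,Q_n(x)$ and summing over $x,x'$, the first term gives exactly $-\sum_x Q_n(x)\,\kldiv*{q(x' \cond x)}{p^r_\theta(x' \cond x)}$. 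After the prefactor $n$ this is the KL term in \eqref{eq:chain_kl}, so it remains to handle the two $\log p_\theta$ terms.

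The second term gives $\sum_x Q_n(x)\log p_\theta(x)$, since $q(\cdot\cond x)$ sums to one. The third term gives $-\sum_{x'} \bigl[\sum_x q(x' \cond x) Q_n(x)\bigr] \log p_\theta(x')$. By the recursion $q_{k+1}(x') = \sum_x q(x'\cond x) q_k(x)$ and the definition of $Q_n$, we have
\begin{equation*}
\sum_x q(x' \cond x)\, Q_n(x) = \frac1n \sum_{k=2}^{n+1} q_k(x').
\end{equation*}
Multiplying both terms by $n$ and adding them, the sums telescope:
\begin{equation*}
\sum_{k=1}^n \sum_x q_k(x)\log p_\theta(x) - \sum_{k=2}^{n+1}\sum_x q_k(x)\log p_\theta(x) = \sum_x q_1(x)\log p_\theta(x) - \sum_x q_{n+1}(x)\log p_\theta(x).
\end{equation*}
This equals $L(\theta) - \sum_x q_{n+1}(x)\log p_\theta(x)$. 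Adding the first term $\sum_x q_{n+1}(x)\log p_\theta(x)$ of \eqref{eq:chain} cancels the $q_{n+1}$ part and leaves $L(\theta)$ minus the KL expression, which is \eqref{eq:chain_kl}.

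No step is hard. The only points needing care are the index bookkeeping in the telescoping step, and positivity of $p_\theta(x)$ on the support, so that all logarithms are finite. Positivity follows from the assumption in the footnote that some power of the transition matrix is strictly positive, which makes the limiting distribution strictly positive.
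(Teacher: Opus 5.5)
Your proof is correct and follows the paper's argument: you make the same substitution $\log p_\theta(x \cond x') = \log p^r_\theta(x' \cond x) + \log p_\theta(x) - \log p_\theta(x')$, split off the KL term, and collapse the remaining $\log p_\theta$ terms to $L(\theta)$. You spell out the telescoping step via $\sum_x q(x' \cond x)\, Q_n(x) = \frac{1}{n}\sum_{k=2}^{n+1} q_k(x')$, which the paper only calls easy to prove, and your positivity remark (from the footnote's assumption that some power of the transition matrix is strictly positive) supplies a detail the paper leaves implicit.
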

This shows that $L_B(\theta, q, n)$ is tight iff $q(x'\cond x)$ coincides with the reverse model $p^r_\theta(x'\cond x)$. 

We would like to approach maximising $L_B(\theta, q, n)$ by a generalised EM algorithm, improving $L_B(\theta, q, n)$ in $\theta$ for a fixed $q$ in one iteration and updating $q$ to make the bound tighter in the other. Computing the gradient of the lower bound is, however, no simpler than computing the gradient of the likelihood $L(\theta)$ because the first term of \eqref{eq:chain} involves the limiting distribution $p_\theta (x)$. The next proposition shows that this term can be omitted for large $n$.
\begin{restatable}{proposition}{PZero}\label{P:zero}
Let $q(x' \cond x)$ be such that its unique limiting distribution coincides with $p_{\bar \theta}(x)$ for a given fixed $\bar \theta$.
Then the gradient of the first term of~\eqref{eq:chain} in $\theta$ at $\bar \theta$ asymptotically vanishes for $n\rightarrow \infty$. In particular, it holds that $\norm*{\nabla_\theta \sum_{x\in\mathcal{X}} q_n(x) \log p_\theta(x)}_1 \leqslant C\alpha^n$ with some $\alpha\in(0,1)$ related to the mixing time of the kernel $q(x'\cond x)$ and a positive constant $C$. 
\end{restatable}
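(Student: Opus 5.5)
The key observation is that $p_\theta(x)$ is a probability distribution for every $\theta$, so $\sum_x p_\theta(x)=1$. Differentiating gives $\sum_x \nabla_\theta p_\theta(x)=0$. At $\theta=\bar\theta$ this yields
\begin{equation*}
\sum_{x\in\mathcal{X}} p_{\bar\theta}(x)\,\nabla_\theta \log p_\theta(x)\big|_{\theta=\bar\theta} = \sum_{x\in\mathcal{X}} \nabla_\theta p_\theta(x)\big|_{\theta=\bar\theta} = 0 ,
\end{equation*}
which is the familiar fact that the score has zero mean under the model itself.

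The plan is therefore to write the gradient of the first term of \eqref{eq:chain} at $\bar\theta$ as
\begin{equation*}
\sum_{x} q_n(x)\, g(x) = \sum_{x} \bigl(q_n(x) - p_{\bar\theta}(x)\bigr)\, g(x), \qquad g(x) \coloneqq \nabla_\theta \log p_\theta(x)\big|_{\theta=\bar\theta}.
\end{equation*}
Here we subtract the vanishing expectation under $p_{\bar\theta}$. Note that $q_n$ does not depend on $\theta$, because $q(x\cond x')$ and $q_1=q$ are fixed, so differentiation passes inside the sum. We then bound
\begin{equation*}
\norm*{\sum_x (q_n(x)-p_{\bar\theta}(x))\, g(x)}_1 \leqslant \max_{x}\norm{g(x)}_1 \cdot \norm{q_n - p_{\bar\theta}}_1 .
\end{equation*}
Finiteness of $g$ requires $p_{\bar\theta}(x)>0$ for every $x$ together with differentiability of $p_\theta$ at $\bar\theta$.

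Positivity follows from the footnote assumption: some power of the kernel is strictly positive, and stationarity then gives $p_{\bar\theta}=p_{\bar\theta}P^m>0$. Differentiability of the limiting distribution in $\theta$ is the one technical point. I would justify it by writing $p_\theta$ as the unique solution of the linear system $p(I-P_\theta)=0$, $\sum_x p(x)=1$. For an irreducible chain this system has full rank, so the implicit function theorem, or Cramer's rule, gives smooth dependence on $\theta$ whenever $P_\theta$ is smooth. Since $\mathcal{X}$ is finite, the maximum over $x$ is a finite constant $C_1$.

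The remaining step is geometric convergence: $\norm{q_n - p_{\bar\theta}}_1 \leqslant C_2\alpha^n$. The hypothesis says that the limiting distribution of $q(x\cond x')$ is $p_{\bar\theta}$, and $q$ is assumed to have a unique limiting distribution, i.e.\ it is primitive. The standard Doeblin/Perron--Frobenius argument then applies. If $Q^m$ has all entries at least $\delta>0$, the Dobrushin contraction coefficient of $Q^m$ is at most $1-|\mathcal{X}|\delta<1$. Since $p_{\bar\theta}$ is stationary for $Q$, we have $q_n-p_{\bar\theta}=(q_1-p_{\bar\theta})Q^{n-1}$, which gives decay with $\alpha=(1-|\mathcal{X}|\delta)^{1/m}$, tied to the mixing time. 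Combining the two bounds gives the claim with $C=C_1C_2$. I expect the only mildly delicate step to be the differentiability of $p_\theta$; everything else is routine.
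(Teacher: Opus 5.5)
Your proof is correct and follows the paper's argument: you use that the score has zero mean under $p_{\bar\theta}$ to rewrite the gradient as $\sum_x (q_n(x)-p_{\bar\theta}(x))\nabla_\theta\log p_\theta(x)$, bound it by the $\ell_1$/TV distance times $\max_x\norm{\nabla_\theta\log p_\theta(x)}_1$, and then invoke geometric convergence of $q_n$ to $p_{\bar\theta}$. Your additional justifications (positivity of $p_{\bar\theta}$ from primitivity, differentiability of the stationary distribution via the linear system, and an explicit Dobrushin rate for $\alpha$) supply details the paper leaves implicit.
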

This motivates the following generic learning algorithm, ignoring the first term of~\eqref{eq:chain} and estimating the gradient of the second term of~\eqref{eq:chain} stochastically, \ie by sampling from $q$ and differentiating $p_\theta(x\cond x')$:
\begin{enumerate}[nosep]
  \item For the current parameter $\theta^{(t)}$ chose $q^{(t)}(x' \cond x)$ with limiting distribution $p_{\theta^{(t)}}(x)$.
  \item Use $q^{(t)}$ to sample a sequence $(x_1,\ldots,x_{n+1})$ starting from  $x_1 \sim q(x)$ -- the data distribution.
  \item Estimate the gradient of $L_B(\theta, q^{(t)}, n)$ as
   \begin{align}\label{eq:stoch-est}
    g = \sum_{k=1}^n \nabla_\theta \log p_{\theta}(x_{k} \cond x_{k+1}) .
   \end{align}
  \item Update $\theta^{(t+1)} = \theta^{(t)} + \lambda g$, where $\lambda$ is the learning rate.
\end{enumerate}
The proofs of the two propositions and further facts used for this algorithm can be found in~\cref{app:proof-P1}.

The algorithm becomes particularly sound when it is possible to use $q^{(t)}(x' \cond x) =  p^r_{\theta^{(t)}}(x' \cond x)$, \ie when sampling from the reverse chain is feasible. 
In particular, this is the case when $p_\theta(x' \cond x)$ satisfies detailed balance \footnote{The Markov chain satisfies detailed balance, if the reverse model coincides with the forward model.}.
If we use $q^{(t)}(x \cond x') = p^r_{\theta^{(t)}}(x\cond x')$ in the algorithm, the lower bound $L_B(\theta,q^{(t)},n)$ becomes tight at $\theta^{(t)}$ and its gradient coincides with the gradient of the likelihood $L(\theta)$ at $\theta^{(t)}$, for any $n$. This choice of $q$ also satisfies~\cref{P:zero} because the limiting distribution of the reverse chain is the same as the limiting distribution of the forward chain.

However, the accessibility of the reverse model is a rather limiting assumption, not satisfied in particular when the kernel $p_\theta(x \cond x')$ is modelled with deep networks, which is of central interest in this work. Instead, we propose reusing the forward model $p_{\theta^{(t)}}(x' \cond x)$ for $q$. In this case, $L_B(\theta, q^{(t)}, n)$ is not tight but is still a valid lower bound to maximise. Moreover, \cref{P:zero} still applies for this choice.
We analyse the behaviour of the algorithm with this choice, assuming that the family of models $p_\theta(x \cond x')$, $\theta\in\Theta$ is rich enough to combine expressive power with detailed balance, \ie contains a large subset $\Theta_{DB}\subset\Theta$ of models satisfying detailed balance.

We conjecture that the algorithm implies a bias towards detailed balance. As we discussed, up to the omitted term with vanishing gradient for $n\rightarrow \infty$, the algorithm can be interpreted as stochastic gradient ascent for~\eqref{eq:chain_kl}. Thus, at each iteration, it balances between maximising the likelihood $L(\theta)$ and minimising the KL divergence from $q(x' \cond x) = p_{\theta}(x' \cond x)$ to $p_{\theta}^r(x' \cond x)$, enforcing their equality, \ie the detailed balance. The KL divergence has a weight of $n$ in~\eqref{eq:chain_kl}. Therefore, the larger $n$, the stronger the force towards the detailed balance. The KL term and its gradient vanish only if $\theta\in\Theta_{DB}$. If this happens, we are effectively able to sample from the reverse chain correctly, and the algorithm's behaviour can be expected to follow the preceding case.

To summarise, we expect the algorithm with $q^{(t)}(x' \cond x) = p_{\theta^{(t)}}(x' \cond x)$ and large $n$ to seek a maximum likelihood solution amongst models satisfying detailed balance. In the case when the set $\Theta_{DB}$ is not rich enough, we have to use a smaller $n$ to allow for a reasonable trade-off between detailed balance and likelihood. However, choosing a smaller $n$ might impair the accuracy of gradient approximation. It is then up to the experiments to verify whether this trade-off still leads to useful models.

\begin{remark}
  We strongly believe that all derivations extend to continuous state spaces $\mathcal{X}$. The only limitation is guaranteeing the uniqueness of the limiting distribution that needs to be assumed for the entire model family.
\end{remark}

\begin{remark}\label{rem:diffusion}
It is noteworthy that choosing an auxiliary model $q(x\cond x')$ with positive KL-divergence $\kldiv*{q(x' \cond x)}{p^r_\theta(x' \cond x)}$ has the consequence that the gap between the likelihood and the lower bound \eqref{eq:chain_kl} grows approximately linearly in $n$. Interestingly, this does not necessarily lead to a breakdown of the approach. A prominent example is denoising diffusion \citep{Ho:NeurIPS2020,Rombach:CVPR2022} (see Example~\ref{ex:diffusion} in Appendix~\ref{app:examples}).
\end{remark}

\section{Multivariate Models with Parametric Conditionals}\label{sec:multivar}

As mentioned in the introduction, our central goal is to develop and learn deep multivariate models for heterogeneous collections of random variables. Let $x = (x_1, x_2\ldots x_m)$ be a collection of such
heterogeneous random variables\footnote{Here, we use lower indices to denote components in contrast to the previous section, where they were denoting time steps.}. Let $p_{\theta_i}(x_i \cond x_{-i})$, for $i=1\ldots m$, be a conditional probability distribution for each variable $x_i$ conditioned on all other variables and parametrised by its own parameter $\theta_i$.
We consider an MCMC process that alternates between sampling components $x_i$. The joint Markov kernel of this MCMC process can be formally written as a convex combination
\begin{equation}\label{eq:gibbs_transition}
    p_\theta(x \cond x') =
    \sum_{i=1}^m \alpha_i \:
    p_{\theta_i}(x_i \cond x_{-i})
    \iverson{x_{-i}\,{=}\,x'_{-i}}
\end{equation}
where the indicator $\iverson{x_{-i}\,{=}\,x'_{-i}}$ expresses that all variables except $x_i$ remain unchanged and $\alpha_i$ defines the relative frequency of resampling of each component.
This process is similar to Gibbs sampling and allows addressing multiple inference problems by clamping the subset of observed variables $x_o$ and iteratively resampling the rest. Supervised learning of $\{\theta_i\}$ is possible via the methods of~\cref{sec:limiting} and semi-supervised learning becomes possible by the expectation-maximisation principle, first completing missing parts of a training example according to the current model.

In the above outline, there is the following subtlety: if $\{\theta_i\}$ are arbitrary, differently clamped processes may be inconsistent. To formalise this, assume that the process with kernel ~\eqref{eq:gibbs_transition} has a unique limiting distribution $p_\theta(x)$. The question is whether the MCMC process with clamped $x_o$ will have the same limiting distribution as $p_\theta(x_{-o} \cond x_o)$ implied by $p_\theta(x)$. This will be the case when model conditionals are consistent, as we now explain.

\begin{definition}
    A collection of conditional distributions $p_{\theta_i}(x_i \cond x_{-i})$, $i=1,\ldots,m$ is {\em consistent} if there exists a joint distribution $p(x)$ that has them as conditionals.
\end{definition}
One example of a consistent system of conditionals is the EF-Harmonium~\citep{Welling:NIPS2004}, a generalisation of RBM (which we will consider in~\cref{ex:EF} in Appendix~\ref{app:examples}).
\citet{Arnold-2001,Shekhovtsov:ICLR2022} discuss such consistency for the case of two variates.
We note that if $p_{\theta_i}(x_i \cond x_{-i})$ are consistent, then an MCMC process with clamped $x_o$ becomes a Gibbs sampling scheme, guaranteed to sample from $p_\theta(x_{-o} \cond x_o)$. We further characterise this consistency in terms of detailed balance of the Markov chain.

\begin{restatable}{theorem}{Tconsistency}\label{T1}
    The Markov chain defined by the kernel \eqref{eq:gibbs_transition} satisfies detailed balance if and only if the collection of conditional distributions $p_{\theta_i}(x_i \cond x_{-i})$, $i=1,\ldots,m$ is consistent. In this case, all individually modelled conditionals  $p_{\theta_i}(x_i \cond x_{-i})$ coincide with the corresponding conditionals of the limiting distribution, $p_\theta(x_i \cond x_{-i})$.
\end{restatable}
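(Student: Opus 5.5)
We prove the two directions separately, working directly with the kernel \eqref{eq:gibbs_transition}. Throughout we use the standing assumption that the chain has a unique limiting distribution $p_\theta(x)$, and we take all $\alpha_i>0$.

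For the direction from detailed balance to consistency, we write detailed balance as $p_\theta(x')\,p_\theta(x\cond x') = p_\theta(x)\,p_\theta(x'\cond x)$ for all $x,x'$. We evaluate this at pairs $x\neq x'$ that differ only in the $i$-th component, so $x_{-i}=x'_{-i}$. Then only the $i$-th summand of the kernel survives on both sides, and after cancelling $\alpha_i$ we get
\begin{equation*}
p_\theta(x'_i,x_{-i})\,p_{\theta_i}(x_i\cond x_{-i}) = p_\theta(x_i,x_{-i})\,p_{\theta_i}(x'_i\cond x_{-i}).
\end{equation*}
This identity also holds trivially when $x_i=x'_i$. We sum it over $x'_i$ and use $\sum_{x'_i}p_{\theta_i}(x'_i\cond x_{-i})=1$. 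This gives $p_\theta(x_{-i})\,p_{\theta_i}(x_i\cond x_{-i}) = p_\theta(x_i,x_{-i})$, that is, $p_{\theta_i}(x_i\cond x_{-i}) = p_\theta(x_i\cond x_{-i})$ whenever $p_\theta(x_{-i})>0$. So the limiting distribution has the modelled conditionals, which proves consistency and the second claim of the theorem together.

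For the converse, assume some joint $p(x)$ has the given conditionals. We check detailed balance of the kernel with respect to $p$ summand by summand. For each $i$, the term $p(x')\,p_{\theta_i}(x_i\cond x'_{-i})\,\iverson{x_{-i}=x'_{-i}}$ equals $p(x_{-i})\,p(x'_i\cond x_{-i})\,p(x_i\cond x_{-i})\,\iverson{x_{-i}=x'_{-i}}$. This expression is symmetric in $x\leftrightarrow x'$, so each Gibbs update is reversible and so is any convex combination of them. Hence $p$ is stationary for the kernel, and by uniqueness $p=p_\theta$. The chain therefore satisfies detailed balance with respect to its limiting distribution, and the conditionals coincide as claimed.

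The main obstacle is the support issue. Consistency as defined only asserts some $p(x)$, and conditionals $p_{\theta_i}(x_i\cond x_{-i})$ are prescribed even at $x_{-i}$ where $p(x_{-i})=0$. Likewise, in the first direction the conclusion needs $p_\theta(x_{-i})>0$. We plan to handle this using uniqueness of the limiting distribution: the footnote's assumption that some power of the kernel is strictly positive gives $p_\theta(x)>0$ for all $x$. We then need that any consistent $p$ is stationary and hence equal to $p_\theta$, so it has full support. If full support cannot be guaranteed, we restrict the statement to $x_{-i}$ in the support, interpreting consistency as holding there.
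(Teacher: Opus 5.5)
Your proposal is correct and follows essentially the paper's proof. In both, detailed balance at pairs differing only in coordinate $i$ reduces to $p_\theta(x'_i,x_{-i})\,p_{\theta_i}(x_i\cond x_{-i}) = p_\theta(x_i,x_{-i})\,p_{\theta_i}(x'_i\cond x_{-i})$, and the converse uses reversibility of the Gibbs updates plus uniqueness of the limiting distribution; your summation over $x'_i$ is a slightly cleaner substitute for the paper's ratio argument, since it avoids dividing by $p_{\theta_i}(x'_i\cond x_{-i})$.

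Your support worry resolves along the route you sketch, and the hedge is unnecessary. Where $p(x_{-i})=0$, both sides of the detailed balance identity vanish because $p(x)=p(x')=0$. So any consistent $p$ is stationary, hence equals $p_\theta$ by uniqueness, and $p_\theta$ is strictly positive under the primitivity footnote.
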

The proof is given in Appendix \ref{app:proof-T1}. 
We see that the detailed balance of the Markov chain, which was desirable for learning in \cref{sec:limiting}, is also necessary for rigorous inference with the proposed method. As we have argued, the generic algorithm \eqref{eq:stoch-est} with the choice $q(x\cond x') = p_\theta(x\cond x')$ enforces detailed balance. The theorem implies that it also enforces consistency when applied to multivariate models; therefore, the choice $q(x\cond x') = p_\theta(x\cond x')$ is of particular interest. Next, we discuss semi-supervised learning cases.

\begin{algorithm}[t]
    \caption{Learning Joint Probability Distributions}
    \label{alg:multi}
    \DontPrintSemicolon
    \KwIn{training example $(x_o,x_{-o})$, where $x_o$ is the group of observed and $x_{-o}$ is the group of unobserved variables; conditional probability distributions $p_{\theta_i}(x_i \cond x_{-i})$ for each variable.}
    \BlankLine
    \VarBlock{{\bf I.} Complete the example by sampling over unobserved variables:}{
    Initialise $\hat x = (x_o,\hat x_{-o})$ with random $\hat x_{-o}$\;
    \RepTimes{$n_{\rm inference}$}{
        Randomly choose unobserved $i\in -o$\;
        Draw $\hat x_i \sim p_{\theta_i}(x_i \cond \hat{x}_{-i})$\;
    }
    }
    \BlankLine
    \VarBlock{ \textbf{II.} Learn the model on the completed example $\hat x$:}{
    \RepTimes{$n$}{
        Randomly choose a component $i \in \{1,\dots,m\}$\;
        Compute $g \gets \nabla_{\theta_i} \log p_{\theta_i}(\hat{x}_i \cond \hat{x}_{-i})$\;
        Update $\theta_i \gets \theta_i + \lambda g$\;
        Draw $\hat{x}_i \sim p_{\theta_i}(x_i \cond \hat{x}_{-i})$\;
    }
    }
\end{algorithm}

\paragraph{Practical Learning Algorithm}
Without loss of generality, training examples can be assumed to have an observed part $x_o$ and an unobserved part $x_{-o}$.
This partitioning can be different for different examples, but it is considered to be independent of the data itself. The likelihood of the observed part is given by
\begin{align}\label{eq:multi1}
    \log p_\theta(x_o) = \log \sum_{x_{-o}} p_\theta(x_o, x_{-o}) .
\end{align}
Following the expectation-maximisation scheme, we use a stochastic approximation of its gradient
\begin{subequations}\label{eq:multi3}
    \begin{align}
    & \hat x_{-o} \sim p_{\theta^{(t)}}(x_{-o}|x_o), \\
    & \nabla_\theta \log p_\theta(x_o) \approx
    \nabla_\theta \log p_\theta(x_o, \hat x_{-o}),
    \end{align}
\end{subequations}
\ie, we first complete the example by sampling the missing values from $p_{\theta^{(t)}}(x_{-o} \cond x_o)$, according to the current model $\theta^{(t)}$. 
This is achieved by running the sampler \eqref{eq:gibbs_transition} with clamped values $x_o$ and is implemented in step I of \cref{alg:multi}. The required number of repeats, $n_{\rm inference}$, depends on the mixing time of the sampler \eqref{eq:gibbs_transition}. The same holds for the inference when we observe $x_o$ and want to predict $x_{-o}$.
Completed examples proceed to step II, implementing the method of \cref{sec:limiting} for $q(x\cond x') = p_\theta(x\cond x')$.

Instead of generating a chain of length $n$ for each individual training example, in step II we keep a batch of examples and in each iteration update the model parameters and resample all examples in the batch according to the transition kernel. In the actual implementation, we also replace a fraction of examples at random with fresh ones from the training data (suitably completed with step I). This way, the loop in step II can run continuously, with the chain for each example having some probability to terminate when it is replaced with a fresh one. We observed this to improve stability. As a consequence, in practice, we do not have an explicit chain length hyperparameter $n$ but an {\em expected chain length}, defined by the batch size and the replacement ratio. 

\begin{remark}
The chain length $n$ (or the equivalent expected chain length) has multiple effects: 1) Training time.
2) A larger $n$ is needed for the approximation used in \cref{P:zero}, especially for models with a large mixing time. 
3) In practical situations, when conditionals $p_{\theta_i}(x_i \cond x_{-i})$ are not flexible enough to ensure consistency (\eg simple exponential families such as factorised Gaussian or Bernoulli), a lower $n$ needs to be used according to the trade-off~\eqref{eq:chain_kl}; otherwise, the gradient of the consistency-enforcing term is much larger than that of the likelihood.
Currently, we do not have control over the mixing time and focus on the training methodology and proof-of-concept experiments, demonstrating that useful trade-offs do exist.
\end{remark}

\section{Experiments} \label{sec:experiments}

\subsection{Validation: Learning Consistency} \label{subsec:toy}
In this experiment, we demonstrate, on a toy example, that the proposed algorithm can learn the true joint distribution, even if some training data are incomplete.
We consider $z \in\{0,1\}^3$ and a ground-truth distribution $p^*(z) \propto \exp(a \sum_i z_i + b\sum_{i<j} z_i z_j)$, \ie a homogeneous Ising model on a complete graph with three nodes. The learned model consists of three conditional distributions implemented as MLPs.
It was trained by our approach in two variants: (1) Using complete data generated from $p^*$ and (2) Using data generated from $p^*$ with components missing at random.
The approximation quality was measured by the KL-divergence of the limiting distribution from the ground-truth model. We repeated each experiment 100 times. The average KL-divergence and the standard deviation were: $0.0007\pm 0.0004$ in the first experiment and $0.0018\pm 0.0013$ in the second one. We believe this shows that the proposed approach works as expected, including also semi-supervised learning.

\subsection{Validation: Chain Length} \label{subsec:artificial}
This experiment investigates the behaviour of the algorithm proposed in \cref{sec:limiting} in the non-tight case with $q(x'\cond x) = p_\theta (x' \cond x)$ and aims to confirm its bias towards detailed balance.
We learn a simple log-linear transition probability distribution to match a given target distribution. 
Let $z \in \{-1,+1\}^n$ be a vector of binary variables (we use $n{=}64$ in this experiment), and $\pi(z) \coloneqq q_1(z)$ be the target probability distribution. The transition probability distribution is defined as
\begin{equation}\label{eq:art_mat}
    p_\theta(z_t \cond z_{t-1})\propto\exp \scalp{W z_{t-1} + b}{z_t} ,
\end{equation}
where the matrix $W\in \mathbb{R}^{n\times n}$ and the bias $b\in\mathbb{R}^n$ are unknown parameters, summarised by $\theta$. We use the marginal distribution of the latent variables $p(z)$ from our MNIST experiments as the target probability distribution $\pi(z)$ (see \cref{subsec:mnist} and \cref{app:artificial} for details).

\begin{figure}[t]
    \centering
    \includegraphics[width=\linewidth]{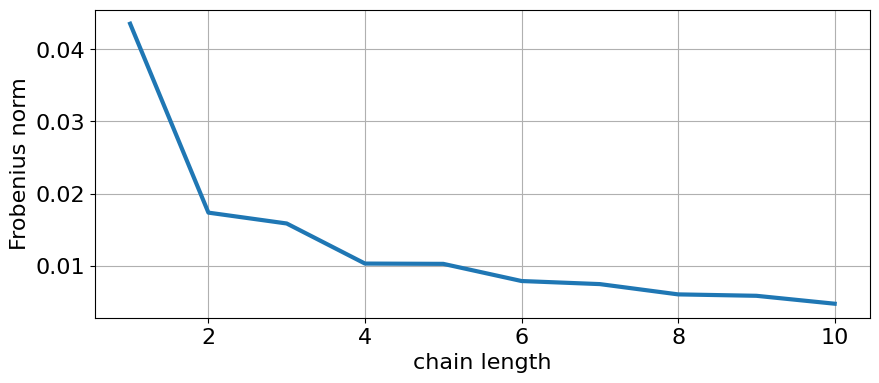}\\
    \caption{\label{fig:art3} Artificial experiment: dependency of the matrix asymmetry on the chain length $n$.}
\end{figure}

We notice that the model satisfies detailed balance $p_\theta = p_\theta^r$ iff $W$ is symmetric. The asymmetry of $W$, measured with Frobenius norm $\norm{W - W^T}_F$, can therefore serve as a natural measure for detailed balance.
We perform a series of experiments varying the chain length $n$. \cref{fig:art3} shows the dependency of the resulting matrix's asymmetry on the chain length. As expected, the learned matrix $W$ becomes symmetric for longer chains. This confirms our conjecture that the algorithm implicitly enforces detailed balance, which tightens the lower bound, making it closer to the likelihood. Additional details and further experiments accomplishing these results can be found in \cref{app:artificial}.

\subsection{MNIST / Fashion MNIST} \label{subsec:mnist}
This experiment illustrates our approach on simple labelled image data.
The task is to learn a distribution $p(x,c)$, where $x$ are images and $c$ are classes. In particular, for MNIST, we consider binarised images $x\in\mathcal{B}^{28\times 28}$ (a common simplification) and classes $c\in\mathcal{C}=\{0,1,\ldots,9\}$ corresponding to digits. We introduce additional latent variables $z\in\mathcal{B}^m$ (we used $m{=}64$), which should ``encode'' different appearances for each symbol. Hence, the full joint model is $p(x,c,z)$. It is specified by means of the following three conditional probability distributions: 
\begin{eqnarray}\label{eq:mnist}
    & & p(x\cond c,z)\propto \exp\scalp{f_x(c_{oh},z)}{x} ,\nonumber \\
    & & p(c\cond x,z)\propto \exp\scalp{f_c(x) + z^T W_c}{c_{oh}}, \nonumber\\
    & & p(z\cond x,c)\propto \exp\scalp{f_z(x) + c_{oh}^T W_z}{z} ,
\end{eqnarray}
where $c_{oh}$ is one-hot encoded $c$. The functions $f_c\colon\mathcal{X}\rightarrow \mathbb R^{10}$ and $f_z\colon\mathcal{X}\rightarrow \mathbb R^m$ are convolutional feedforward networks with an encoder-like architecture, \ie with decreasing spatial resolution. The decoder $f_x\colon\mathcal{B}^{10+m}\rightarrow\mathbb R^{28\times 28}$ ($c_{oh}$ and $z$ are concatenated) consists of a few ``transposed convolutional'' layers, $W_c$ and $W_z$ are $m{\times}10$ and $10{\times}m$ matrices, respectively.

The model for Fashion MNIST is exactly the same except that we use grey-valued images $x\in\mathbb{R}^{28\times 28}$ and model $p(x\cond c,z)$ as a Gaussian $p(x\cond c,z)=\mathcal N(x;\mu(c,z),\sigma)$, where the means $\mu(c,z)$ for each pixel are provided by a feedforward network with the same architecture as $f_x$ in \eqref{eq:mnist} and $\sigma\in\mathbb R$ is a global learnable parameter. In addition, we increased the dimension of the latent space from $m=64$ to $m=256$.

The models were learned by Algorithm~\ref{alg:multi}. The completion step~I in these models is trivial -- for each training sample, we only need to sample once from $p(z\cond x,c)$. The optimisation step~II of the algorithm was performed with the average chain length $n=4$.

The results of conditional generation are presented in \cref{fig:fmnist}. Given a class $c$, we sample images $x\sim p(x\cond c)$, which can be done by alternately sampling from $p(x\cond c,z)$ and $p(z\cond x,c)$, starting from a random $z$. For better visualisation, we show probabilities $p(x\cond c,z)$ for MNIST and means $\mu(c,z)$ for Fashion MNIST for the last sampled $z$. In order to rate the image quality quantitatively, we use the Fréchet Inception Distance (we used the code from \citep{Seitzer2020FID}). The obtained values are $5.87$ for MNIST and $30.49$ for Fashion MNIST.

Next, we define a classification task as maximum marginal decision $\arg\max_c\sum_z p(c,z\cond x)$, where marginalisation over $z$ is implemented by alternating sampling from $p(c\cond x,z)$ and from $p(z\cond x,c)$, starting from a random $z$. We achieve  $99.04$\% classification accuracy on MNIST and $88.61$\% on Fashion MNIST. 

The achieved values are on par with or slightly worse than the actual state-of-the-art\footnote{We used {\tt https://paperswithcode.com} to get an overview.}. The aim of the experiment was rather to illustrate the ``universality'' of our approach, \ie using one and the same model for different inference problems. We have not tuned the models towards a specific inference problem. Nevertheless, we find that the same simple model architecture works quite well for different datasets.
\subsection{CelebA} \label{subsec:celeba}
The next group of experiments illustrates our approach on more challenging data. We use the CelebA-HQ dataset, containing images of human faces, along with attributes and segmentations. In particular, we demonstrate the ability of our approach to cope with different downstream tasks.

We introduce the following random variables. Let $x\in\mathbb R^{3*64*64}$ be images (we downscaled images and segmentations to $64{\times}64$ resolution for simplicity), $s\in\mathcal{B}^{18*64*64}$ be segmentations (a binary mask for each of the 18 segments) and $c\in\mathcal{B}^{40}$ be image attributes, like \eg ``male'', ``blond'', ``wearing glasses'', \etc. In addition, we introduce two groups of latent variables: $y\in\mathcal{B}^{256}$ are binary and $z\in\mathbb{R}^{16}$ are continuous.

\begin{figure}[t]
    \centering
        \includegraphics[width=\linewidth]{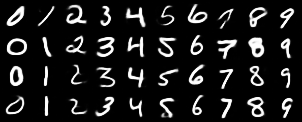}\\[5pt]
        \includegraphics[width=\linewidth]{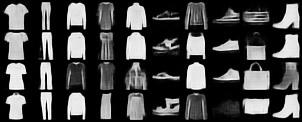}\\
    \caption{\label{fig:fmnist} Conditional generation for MNIST / Fashion MNIST. Columns correspond to classes.}
\end{figure}
\begin{figure}[t]
    \centering
    \adjincludegraphics[width=\linewidth,trim={0 0 {0.3333\width} 0},clip]{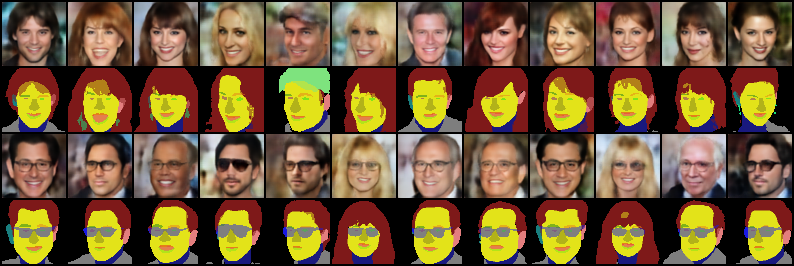}\\[5pt]
    \adjincludegraphics[width=\linewidth,trim={0 0 {0.3333\width} 0},clip]{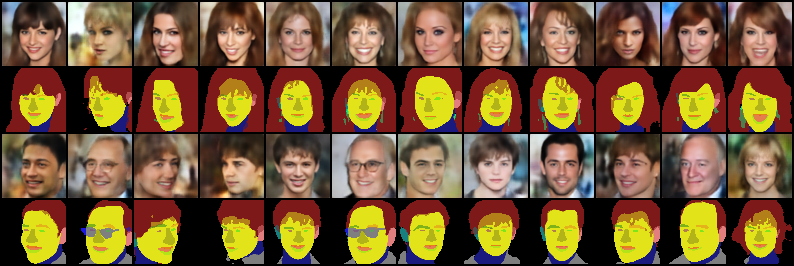}\\
    \caption{\label{fig:celeba_cond} Generation for CelebA, conditioned on the presence of glasses attribute (top) and male attribute (bottom).
    Each display shows pairs of image and segmentation generated with the attribute switched off (first two rows) and switched on (last two rows).
    }
\end{figure}
\begin{figure}[t]
\centering
\adjincludegraphics[width=\linewidth,trim={0 0 {0.33333\width} 0},clip]{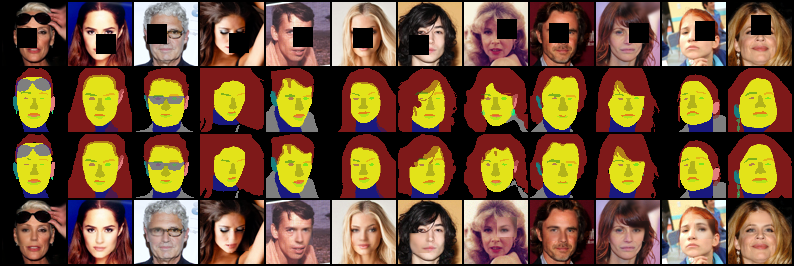}\\
\caption{\label{fig:celeba_segming} Segmentation from incomplete images / in-painting. First row: original images with hidden parts shown as black squares, second row: ground truth segmentations, third row: predicted segmentations,  fourth row: in-painting.}
\end{figure}
\begin{figure}[t]
\centering
\adjincludegraphics[width=\linewidth,trim={0 0 {0.3333\width} 0},clip]{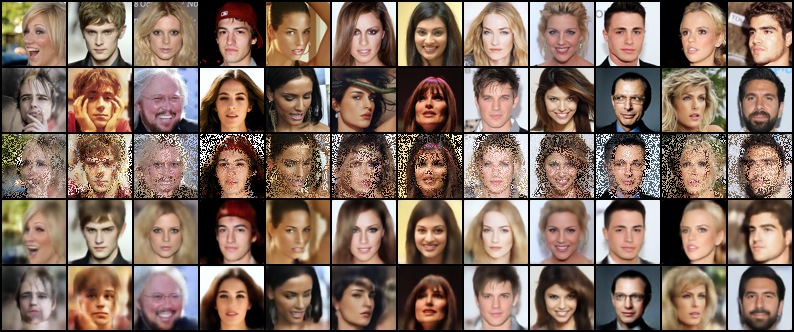}\\
\caption{\label{fig:celeba_demix} Image demixing. First two rows: original image pairs, third row: mixed image, last two rows: demixing results.}
\end{figure}

The joint model is represented by five conditional probability distributions, \ie for each variable conditioned on the rest. Each conditional is implemented as a feed-forward network of moderate complexity. Thereby, networks for binary variables output logits, whereas networks for continuous variables $x$ and $z$ output means and variances for each element, for example $p(x\cond c, y, z, s) = \mathcal{N}(x; \mu(c, y, z, s), \sigma(c, y, z, s))$. 

The model was learned by the Algorithm~\ref{alg:multi} for about 470k gradient updates (corresponding to 1000 epochs in our implementation). We used $10$ sampling iterations in the completion step I (we need to complete triples $(c,x,s)$ by sampling the latent variables $y$ and $z$) and the expected chain length $n=16$ in step II. Further details are given in \cref{app:nets}. 
In \cref{fig:celeba_cond} we demonstrate generation conditioned on attributes. The achieved FID score for the generated images is 46.45.

The next experiment demonstrates the ability of the learned model to complete missing information for any subset of its variables, including any subset of their individual coordinates.
This problem cannot be approached by any standard factorisation, because the subset of unobserved variables is not known at training time. We illustrate this ability on the example of segmentation from incomplete images. Let $x = (x_o, x_h)$, where the part $x_o$ is observed and $x_h$ is hidden. In order to segment such an image by the maximum marginal decision strategy, we need to compute the marginal probabilities $p(s_{ik} \cond x_o)$ for each pixel $i$ and segmentation label $k$. They can be estimated by sampling, which alternately draws all unobserved random variables, including $x_h$, while clamping $x_o$. We accumulate segmentation label frequencies for each pixel during the sampling and finally set the label $k$ if its marginal probability is greater than $0.5$. At the same time, we complete the missing image content (\ie in-painting). For this, we employ a mean marginal decision, \ie we average all sampled image values observed during sampling. The results are illustrated in \cref{fig:celeba_segming}.

The final experiment shows how to extend the model to perform even more complex inference problems. We consider the following image ``demixing'' task. A mixed image $x$ is obtained from two images $x'$ and $x''$ by randomly choosing the RGB values in each pixel of $x$ from either the first or the second image. The task is to reconstruct the original images given the mixed one.

In order to define an appropriate statistical model for this task, we consider two instances of our trained model for the random variables $(c',x',y',z',s')$ and $(c'',x'',y'',z'',s'')$ for the first and the second image, respectively. In addition, we define a binary mask $m\in\mathcal{B}^{64*64}$ and assume a uniform prior $p(m)$. The mixed image is obtained as $x = x'*(1-m) + x''*m$. Even though the last expression is deterministic, it is easy to see how to sample all introduced components conditioned on $x$. For segmentations, attributes and latent variables, we can directly use the learned conditionals. In order to sample \eg from $p(x'\cond c',y',z',s', x, m)$, we proceed as follows. We first sample from the learned conditional $p(x'\cond c',y',z',s')$, and then replace $x'$ by $x$ in pixels with $m=0$ (similarly for $x''$). Finally, in order to sample from $p(m\cond \text{everything})$, we compute the likelihood ratio $\ln p(x\cond c'',y'',z'',s'') - \ln p(x\cond c',y',z',s')$, which is the logit for $p(m\cond \text{everything})$.
We notice that this demixing task is highly ambiguous. Therefore, to regularise and simplify it, we assume known segmentations $s'$ and $s''$. \cref{fig:celeba_demix} shows the results.
%
\begin{table}[t]
    \caption{\label{fig:compare}FID scores for compared methods.}
\centering
    \begin{small}
\begin{sc}
        \begin{tabular}{|l|c|c|}
            \hline
            Dataset / method & Random code & Limiting \\
            \hline
            FMNIST, VAE & 81.70 & 52.66 \\
            FMNIST, Symmetric & 52.39 & 47.24 \\
            FMNIST, Our & -- & 35.87 \\
            \hline
            CelebA, VAE & 86.55 & 133.31 \\
            CelebA, Symmetric & 98.97 & 120.65 \\
            CelebA, Our & -- & 118.92 \\
            \hline
        \end{tabular}\\
    \end{sc}
    \end{small}
\end{table}
\subsection{Comparison to Other Methods} \label{subsec:compare}
We compare the proposed approach with (i) standard VAE and (ii) Symmetric Equilibrium learning \citep{Flach-24}. 
The comparison with these works is only possible on a task with two groups of variables. 
We decided on conditional image generation on Fashion MNIST and CelebA datasets. 
The goal is to learn a joint model $p(x,z\cond c)$. 
It consists of a decoder $p(x\cond z, c)$ and an encoder $q(z\cond x, c)$, where $x$ are images, $c$ are classes or attributes, and $z$ are the latent variables. 
For VAE and Symmetric learning, we additionally assume a particular fixed $p(z)$, hence $p(x,z\cond c) = p(z)p(x\cond z,c)$. 
In our approach, $p(x,z\cond c)$ is implicitly defined by the decoder and the encoder; \ie we have no explicit assumption about the prior $p(z)$. All compared methods use the same architecture of the encoder and decoder, similar to \cref{subsec:mnist,subsec:celeba}. For Fashion MNIST, we used binary latent $z$ distributed a priori uniformly (for the first two methods). For CelebA, $z$ are continuous with a Gaussian prior.

For each method and dataset, we report FID scores for (i) images generated from random codes, and (ii) images generated from the limiting distribution of the corresponding MCMC. The results are summarized in \cref{fig:compare}. 
The model for FMNIST images can combine good generative capabilities with consistency. Images generated from the limiting distribution are better than those generated from random codes, and our method gives the best FID score. However, this does not hold for the complex distribution of CelebA images. Good decoders in the considered model have no consistent encoder counterparts. Directed models (VAE, symmetric) focus on the decoder performance and achieve better FID scores when generating from random codes at the cost of consistency, which is, however, important for general inference tasks (as discussed in~\cref{sec:multivar}). Our method enforces more consistency and achieves less good FID scores; however, it is still better than those from the limiting distribution of the directed models.
Combining good generative capabilities with consistency requires more complex models, as \eg the one used in the previous section.
See~\cref{subsec:discussionTable1} for more discussion.

\section{Conclusion}
We consider deep multivariate models for heterogeneous collections of random variables and propose to model them in terms of conditional distributions for each variable conditioned on the rest. Learning such collections of conditional distributions can be approached as training a parametrised Markov chain kernel by maximising the data likelihood of its limiting distribution. This has the advantage of allowing a wide range of semi-supervised learning scenarios. Our training algorithm is not restricted to models with detailed balance because the choice of the (average) length of the sampling chain provides a means of balancing the bias towards detailed balance and the data likelihood objective. 

Our experiments show that such models can be learned for simple and complex visual data. Particularly, we focus on the versatility of the learned models \wrt different inference tasks. Moreover, we show that once trained, they can be easily extended for more complex downstream tasks. \cref{sec:discussion} discusses limitations and open questions.

\bibliography{strings,paracond}
\bibliographystyle{icml2026}

\newpage
\appendix
\onecolumn

\section{Proofs and Examples}\label{app:proof}

\subsection{Proofs for the Generic Algorithm}\label{app:proof-P1}
The follwoing result is a direct analogue of ELBO decomposition, applied to our recurrent lower bound.
\PKL*
\begin{proof}
 We use the equality
  \begin{equation*}
    \log p_\theta(x \cond x' ) = \log p^r_\theta(x' \cond x) - 
    \log p_\theta(x') + \log p_\theta(x)
  \end{equation*}
 to substitute $\log p_\theta(x \cond x')$ in 
 \begin{equation}
  L_B(\theta, q, n) = \sum_{x\in\mathcal{X}} q_{n+1}(x) \log p_\theta(x) +
  n\hspace{-.4em} \sum_{x, x'\in\mathcal{X}} q(x' \cond x) Q_n(x) 
  \Bigl[\log p_\theta(x \cond x') - \log q(x' \cond x) \Bigr] . 
 \end{equation}
 Expanding terms we obtain
 \begin{align}
  & L_B(\theta, q, n) = 
  \sum_{x} \bigl[q_{n+1}(x) + n Q_n(x)\bigr] \log p_\theta(x) - \\
  & n \sum_{x,x'} q(x \cond x') Q_n(x') \log p_\theta(x) - 
  n \sum_x Q_n(x) \kldiv*{q(x' \cond x)}{p^r_\theta(x' \cond x)} .
 \end{align}
 Recalling the definition
 \begin{align}
  q_k(x) &= \sum_{x'\in\mathcal{X}} q(x \cond x') q_{k-1}(x') \\
  Q_n(x) &= \frac{1}{n} \sum_{k=1}^n q_k(x).
 \end{align}
 it is easy to prove that the first two terms collapse to the data likelihood
 \begin{equation}
 L(\theta) = \sum_{x\in\mathcal{X}} q(x) \log p_\theta(x) .
\end{equation}
\end{proof}

\PZero*
\begin{proof}
 If the Markov chain with kernel $q(x \cond x')$ has the limiting distribution $p_{\bar \theta}(x)$, then by construction $q_n(x)$ converges to this distribution. It follows that the first term in \eqref{eq:chain} converges to 
 \begin{equation}
    \sum_{x\in\mathcal{X}} q_{n+1}(x) \log p_\theta(x) 
    \xrightarrow{n\rightarrow\infty}
    \sum_{x\in\mathcal{X}} p_{\bar \theta}(x) \log p_\theta(x),
 \end{equation}
 Its gradient for $q_{\infty}(x) = p_{\bar \theta}(x)$ expresses as 
 \begin{align}
    &\nabla_\theta \sum_{x\in\mathcal{X}} p_{\bar\theta}(x) \log p_\theta(x)
    = \sum_{x\in\mathcal{X}} p_{\bar \theta}(x) \nabla_\theta \log p_\theta(x) \notag\\
    & = \sum_{x\in\mathcal{X}} p_{\bar \theta}(x) \frac{1}{p_{\bar\theta}(x)} \nabla_\theta p_\theta(x)
    = \sum_{x\in\mathcal{X}} \nabla_\theta p_\theta(x)\notag = \nabla_\theta \sum_{x\in\mathcal{X}} p_\theta(x)
    = \nabla_\theta 1 = 0.
 \end{align}
 Moreover, we can bound the norm of the gradient for finite $n$ by
 \begin{equation}
  \norm*{\nabla_\theta \sum_{x\in\mathcal{X}} q_n(x) \log p_{\theta}(x)}_1 = 
  \norm*{\sum_{x\in\mathcal{X}} [q_n(x) - p_{\bar{\theta}}(x)] 
  \nabla_{\theta} \log p_{\theta}(x)}_1 \leqslant 
  \norm*{q_n - p_{\bar{\theta}}}_{TV} \max_{x\in\mathcal{X}} 
  \norm*{\nabla_\theta\log p_{\theta}(x)}_1 .
 \end{equation}
 The claim is then a direct consequence of the convergence of $q_n$ to its limiting distribution, being equal $p_{\bar{\theta}}$. This convergence is geometric: $\norm*{q_n - p_{\bar{\theta}}}_{TV} \leqslant \widetilde{C} \alpha^n$ , where $\alpha\in(0,1)$ depends on the mixing time of the kernel $q(x'\cond x)$.
 The constant $C$ is then equal to $\widetilde{C} \max_{x\in\mathcal{X}} 
  \norm*{\nabla_\theta\log p_{\theta}(x)}_1$ at $\theta=\bar \theta$.
\end{proof}
\subsubsection{Further used facts}\label{sec:alg0}
The expression for the gradient estimate~\eqref{eq:stoch-est} is obtained as follows. Let us expand the gradient of the second term of~\eqref{eq:chain}, \ie of
\begin{subequations}
\begin{align}
\hat F = n\sum_{x, x'\in\mathcal{X}} q(x' \cond x) Q_n(x) \Bigl[\log p_\theta(x \cond x') - \log q(x' \cond x) \Bigr].
\end{align}
\end{subequations}
Since $q$ is not varied with $\theta$, the gradient of $\log q(x' \cond x)$ is zero. Substituting $Q_n(x) = \frac{1}{n}\sum_{k=1}^n q_k(x)$ we obtain
\begin{equation}
\nabla_\theta \hat F = \sum_{x, x'\in\mathcal{X}} q(x'\cond x) \sum_{k=1}^n q_k(x)  \nabla_\theta \log p_\theta(x \cond x') 
= \sum_{k=1}^n \sum_{x, x'\in\mathcal{X}} q(x'\cond x) q_k(x)  \nabla_\theta \log p_\theta(x \cond x').
\end{equation}
If we sample $x^1 \sim q_1(x)$, $x^{k+1} \sim q(x' \cond x^{k}))$, then $(x^{k+1}, x^{k})$ is a sample from $q(x'\cond x) q_k(x)$. Therefore, $g$ in~\eqref{eq:stoch-est} is an unbiased stochastic gradient estimate of $\hat F$.

The fact that the reverse chain has the same limiting distribution as the forward one is verified as follows. We have by definition
$$
p^r_\theta(x' \cond x) = p_\theta(x \cond x')p_\theta(x')/p_\theta(x)
$$
and
$$
p_\theta(x) = \sum_{x'}p_\theta(x')p_\theta(x \cond x').
$$
We substitute and verify that 
\begin{equation}
\sum_{x}p_\theta(x) p^r_\theta(x' \cond x) = \sum_{x}p_\theta(x) p_\theta(x \cond x')p_\theta(x')/p_\theta(x)
= \sum_{x} p_\theta(x \cond x')p_\theta(x') = p_\theta(x').
\end{equation}

\subsection{Proof of the Theorem}\label{app:proof-T1}
First, we recall our setup. We consider a collection of conditional distributions $p_{\theta_i}(x_i \cond x_{-i})$, $i=1,\ldots,m$, where $x_i$ is a particular variable from $x = (x_1, x_2\ldots x_m)$, and $x_{-i}$ summarises all variables except $x_i$. The Markov kernel of the corresponding MCMC process is defined as
\begin{equation}\label{eq:app_gibbs_transition}
    p_\theta(x \cond x') =
    \sum_{i=1}^m \alpha_i \:
    p_{\theta_i}(x_i \cond x_{-i})
    \iverson{x_{-i}=x'_{-i}} ,
\end{equation}
and its limiting distribution is denoted by $p_\theta(x)$. The stationarity condition for this kernel implies that its limiting distribution has the form of a mixture
\begin{equation}\label{eq:mixture}
 p_\theta(x) = \sum_{i=1}^m 
 \alpha_i p_{\theta_i}(x_i \cond x_{-i}) p_\theta(x_{-i}) .
\end{equation}
It is not necessarily the case that {\em limiting conditionals} $p_\theta(x_i \cond x_{-i})$ coincide with the {\em model conditionals} $p_{\theta_i}(x_i \cond x_{-i})$.

We call a collection of conditional distributions {\em consistent} if there exists a joint distribution having them as conditionals.

\Tconsistency*

\begin{proof}
 We begin the proof with the last statement. If the conditionals of the stationary distribution coincide with the model conditionals, then the latter are consistent with the joint being $p_\theta(x)$.

 If the model conditionals $p_{\theta_i}(x_i \cond x_{-i})$ are consistent, \ie have a common joint $\bar p(x)$, then the MCMC with the kernel \eqref{eq:app_gibbs_transition} is a standard Gibbs sampler for $\bar p(x)$, its limiting distribution is $\bar p(x) = p_\theta (x)$, and its limiting conditionals are $\bar p(x_i \cond x_{-i}) = p_{\theta}(x_i \cond x_{-i}) = p_{\theta_i}(x_i \cond x_{-i})$. In this case, the mixture \eqref{eq:mixture} does not depend on the weights $\alpha_i$ because all its components coincide. The standard Gibbs sampler is also known to satisfy the detailed balance.
 
 It remains to prove the reverse, \ie assuming detailed balance for the MCMC kernel \eqref{eq:app_gibbs_transition}, we prove that the collection of conditionals is consistent.
 Let us consider a particular $x_i$, its complement $x_{-i}$, and the corresponding conditional $p_{\theta_i}(x_i \cond x_{-i})$. We aim at proving $p_{\theta_i}(x_i \cond x_{-i}) = p_\theta(x_i \cond x_{-i})$. The detailed balance constraint expressed in these components reads
    \begin{equation}\label{eq:app_dbc}
        p_\theta(x_i, x_{-i})\cdot p_\theta(x'_i,x'_{-i} \cond x_i, x_{-i}) = 
        p_\theta(x'_i, x'_{-i})\cdot p_\theta(x_i,x_{-i} \cond x'_i, x'_{-i}) .
    \end{equation}
 Recall that our kernel \eqref{eq:app_gibbs_transition} has the property that in each transition either $x_{-i}$ or $x_i$ can change but not both simultaneously. 

 In the former case, $x_{-i} \neq x'_{-i}$, the $i$-th addend in \eqref{eq:app_gibbs_transition} is zero for any $(x, x')$ due to the vanishing indicator $\iverson{x_{-i}=x'_{-i}}$. So, we can not draw any conclusion about $p_{\theta_i}$ from \eqref{eq:app_dbc}.

 Let us consider the latter case $x_{-i} = x'_{-i}$. If $x_i = x'_i$, the left-hand side and the right-hand side of \eqref{eq:app_dbc} are equal for any $(x,x')$. Hence, again we can not draw any conclusion about $p_{\theta_i}$ from \eqref{eq:app_dbc}.

 It remains only to consider the case $x_{-i} = x'_{-i}$ and $x_i \neq x'_i$. We substitute \eqref{eq:app_gibbs_transition} into \eqref{eq:app_dbc}. All addends in the sum of \eqref{eq:app_gibbs_transition} except the $i$-th one vanish (as the corresponding indicators are zero). The factor $\alpha_i$ on both sides cancels. Therefore, \eqref{eq:app_dbc} is reduced to
        \begin{equation}
            p_\theta(x_i, x_{-i})\cdot p_{\theta_i}(x'_i \cond x_{-i}) = 
            p_\theta(x'_i, x_{-i})\cdot p_{\theta_i}(x_i \cond x_{-i}) ,
        \end{equation}
        or, equivalently
        \begin{equation}
            p_\theta(x_i\cond x_{-i})\cdot p_{\theta_i}(x'_i \cond x_{-i}) = 
            p_\theta(x'_i\cond x_{-i})\cdot p_{\theta_i}(x_i \cond x_{-i}) .
        \end{equation}
        It follows
        \begin{equation}
            \frac{p_{\theta_i}(x_i\cond x_{-i})}{p_{\theta_i}(x'_i\cond x_{-i})} = 
            \frac{p_\theta(x_i\cond x_{-i})}{p_\theta(x'_i\cond x_{-i})}\ \ \forall x_i,x'_i,x_{-i},
        \end{equation}
        which finally means $p_{\theta_i}(x_i\cond x_{-i}) = p_\theta(x_i\cond x_{-i})$. As $p_\theta(x_i\cond x_{-i})$ are consistent per definition with the joint being $p_\theta(x)$, so are also $p_{\theta_i}(x_i\cond x_{-i})$.

    \end{proof}

\subsection{Examples}\label{app:examples}
We mentioned in \cref{rem:diffusion} that denoising diffusion is a prominent example showing that a linearly growing gap between the likelihood and the lower bound in \eqref{eq:chain_kl} does not necessarily lead to a breakdown of the approach. 
\begin{example}\label{ex:diffusion}
 Denoising diffusion \citep{Ho:NeurIPS2020,Rombach:CVPR2022} fixes $q(x \cond x')$ to a simple ``noisifying'' model, whose limiting distribution is independent standard Gaussian noise, \ie $\lim_{n\to\infty} q_n = \mathcal{N}(0,\mathbb{I})$.\footnote{We consider a slightly simplified version of the diffusion model, assuming that both, the forward and reverse processes are homogeneous Markov chains.} This has the advantage, that the distributions $q_n(x)$ can be computed in closed form. The disadvantage of the diffusion approach is that the lower bound is no longer tight and the corresponding gap increases linearly with the sampling length. This follows from \eqref{eq:chain_kl} because the KL-divergence $\kldiv*{q(x' \cond x)}{p^r_\theta(x' \cond x)}$ is strictly positive since the limiting distributions of $p_\theta(x\cond x')$ and $q(x \cond x')$ are different by definition.
 Despite the convincing experimental results, it remains unclear whether the method maximises the true objective function, \ie the data likelihood, when maximising the lower bound with the chosen fixed diffusion process.
\end{example}

The next examples show that the proposed \cref{alg:multi} is similar to known techniques in simple cases.

\begin{example} Consider the supervised case, \ie with complete realisations $x = (x_1, x_2\ldots x_m)$. If we stop the reverse chain at $n=1$, the optimised objective becomes
\begin{equation}
 L_P(\theta) = \frac{1}{m} \sum_{i=1}^m \log p_{\theta_i}(x_i \cond x_{-i}) + \text{\it const},
\end{equation}
which corresponds to {\em pseudo-likelihood} \citep{Besag:1975}. Notice that this enforces consistency simply because the empirical data distribution has by itself consistent conditionals.
\end{example}

\begin{example}\label{ex:EF}
 Let us consider the task of learning an EF-Harmonium \citep{Welling:NIPS2004}
 \begin{equation}\label{eq:ef_harmonium}
    p_W(x,y) = \frac{1}{Z(W)}\exp \scalp*{\psi(x)}{W\xi(y)} ,
\end{equation}
with sufficient statistics $\psi$ and $\xi$, in the fully supervised case. A stochastic gradient of $\log p_W(x \cond y)$ can be obtained with
\begin{subequations}
\begin{align}\label{eq:ef_stgrad}
    & \hat{x} \sim p_W(x \cond y),\\
    & \hat \nabla_W \log p_W(x \cond y) = \psi(x)\otimes \xi(y) - \psi(\hat x)\otimes \xi(y).
\end{align}
\end{subequations}
The gradient of $\log p_W(y \cond x)$ is similar. This model is consistent by design and the sampler \eqref{eq:gibbs_transition} satisfies detailed balance.
Applying the proposed learning algorithm results in the {\em contrastive divergence} algorithm \citep{Hinton:ICANN1999}. This is because almost all gradient terms for the sequence $((x_1,y_1),\ldots,(x_n,y_n))$ sampled from \eqref{eq:gibbs_transition} cancel out, which leaves the gradient estimated for the chain as
\begin{equation}\label{eq:cd}
    g = \psi(x_1)\otimes\xi(y_1) - \psi(\hat x_n)\otimes\xi(\hat y_n).
\end{equation}
\end{example}

\section{Validation Experiments: Dependency on the Chain Length} \label{app:artificial}

We present additional details and further experiments accomplishing results from Sec.~\ref{subsec:artificial}

\begin{figure}[ht]
    \begin{center}
        \includegraphics[width=0.7\linewidth]{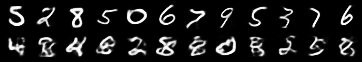}
    \end{center}
    \caption{\label{fig:app_art1} Given a code $z$, images are obtained by  sampling, which alternates between sampling from $p(x\cond c,z)$ and $p(c\cond x,z)$, \ie $x\sim p(x\cond z)$. Top row: $z\sim\pi(z)$ (see the text), bottom row: $z$ uniform.}
\end{figure}

We used the marginal distribution $p(z)$ from our MNIST experiments as the target probability distribution $\pi(z)$. Recall that in the MNIST experiments we learned three conditional probability distributions $p(x\cond c,z)$, $p(c\cond x,z)$ and $p(z\cond x,c)$, where $x$ are binarised MNIST images, $c$ are classes (digits), and $z$ are binary latent variables. The trained model admits sampling from $\pi(z)$ as follows: (i) pick an example $(x,c)$ from the MNIST training set, and (ii) sample $z\sim p(z\cond x,c)$. Fig.~\ref{fig:app_art1} illustrates that the obtained $\pi(z)$ is not trivial.

In \cref{subsec:artificial} we presented a dependency of the model consistency (addressed implicitly through the matrix symmetry) on the chain length for a simple log-linear model \eqref{eq:art_mat}. Here we present a more detailed view, i.e.~we want to inspect the dependency of the optimisation objective on $n$ during training. However, when defining the generic algorithm, we have omitted an intractable term $\sum_x q_{n+1}(x) \log p_\theta(x)$, which has an asymptotically vanishing gradient but a non-negligible value. In order to approximate the value of $L_B(\theta, q , n)$ more accurately we use the objective
\begin{equation}\label{hatF}
F(\theta, q, n) = \sum_{x,x'\in\mathcal{X}} q(x' \cond x) q_n(x) \log q(x' \cond x) + 
 n \hspace{-.4em} \sum_{x, x'\in\mathcal{X}} q(x' \cond x) Q_n(x) \Bigl[\log p_\theta(x \cond x') - \log q(x' \cond x) \Bigr] ,
\end{equation}
see~\cref{sec:LB-value} for the rationale. 

We perform three experiments:
\begin{enumerate}
    \item Using log-linear transition probability \eqref{eq:art_mat}.
    \item Recall that \eqref{eq:art_mat} satisfies detailed balance iff the matrix $W$ is symmetric. In the second experiment we enforce detailed balance by setting $W = \frac{1}{2}(W + W^T)$ after each update iteration.
    \item Instead of the log-linear form \eqref{eq:art_mat} we use an MLP for $p(z_t\cond z_{t-1})$ having one hidden layer with $128$ units and $\tanh$-activations.
\end{enumerate}
For each of these three cases we perform the learning two times. First with short chains, using $n=1$, and second with longer chains, using $n=10$.

\begin{figure*}[p!]
    \begin{center}
        \includegraphics[width=0.43\textwidth]{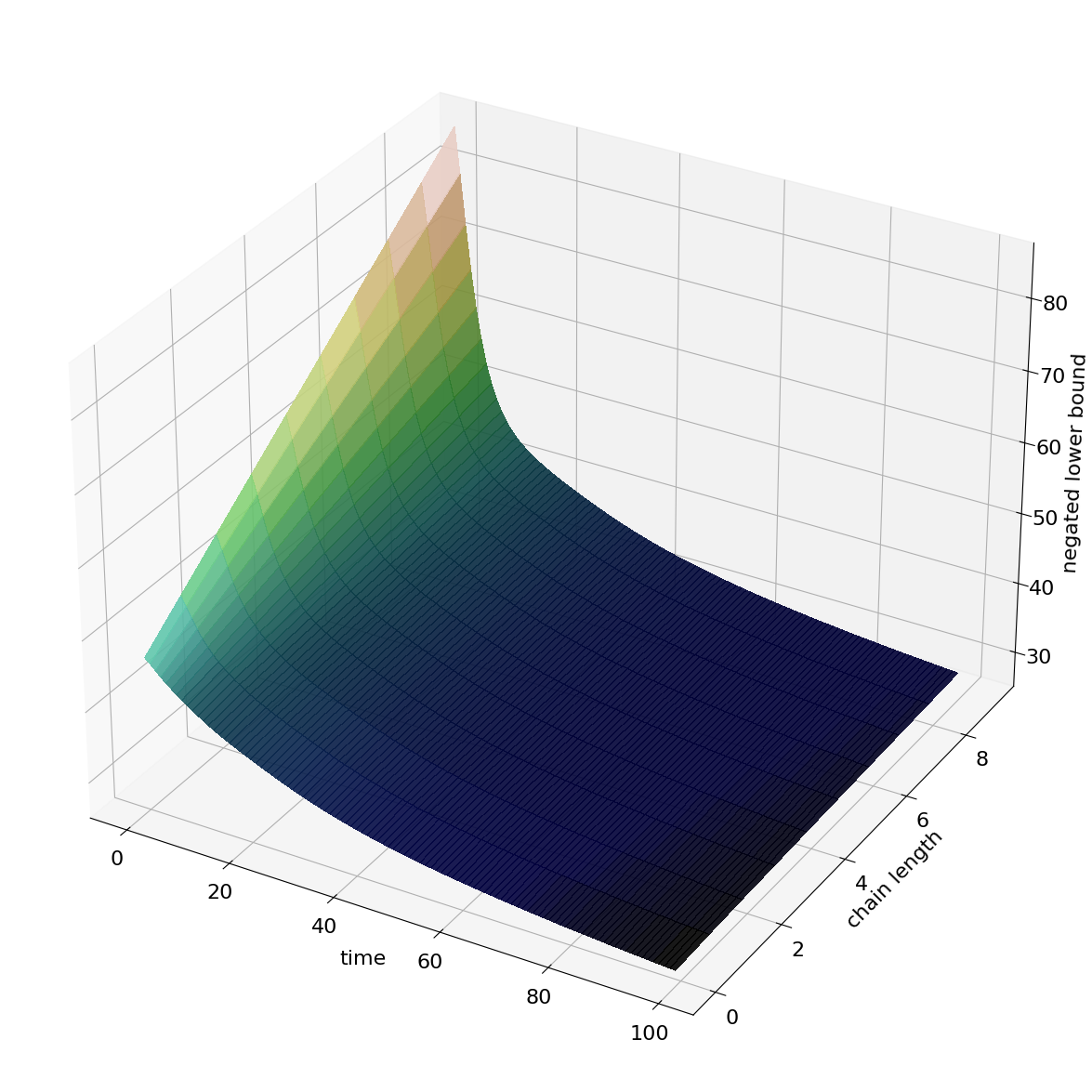}\hspace{10pt}
        \includegraphics[width=0.43\textwidth]{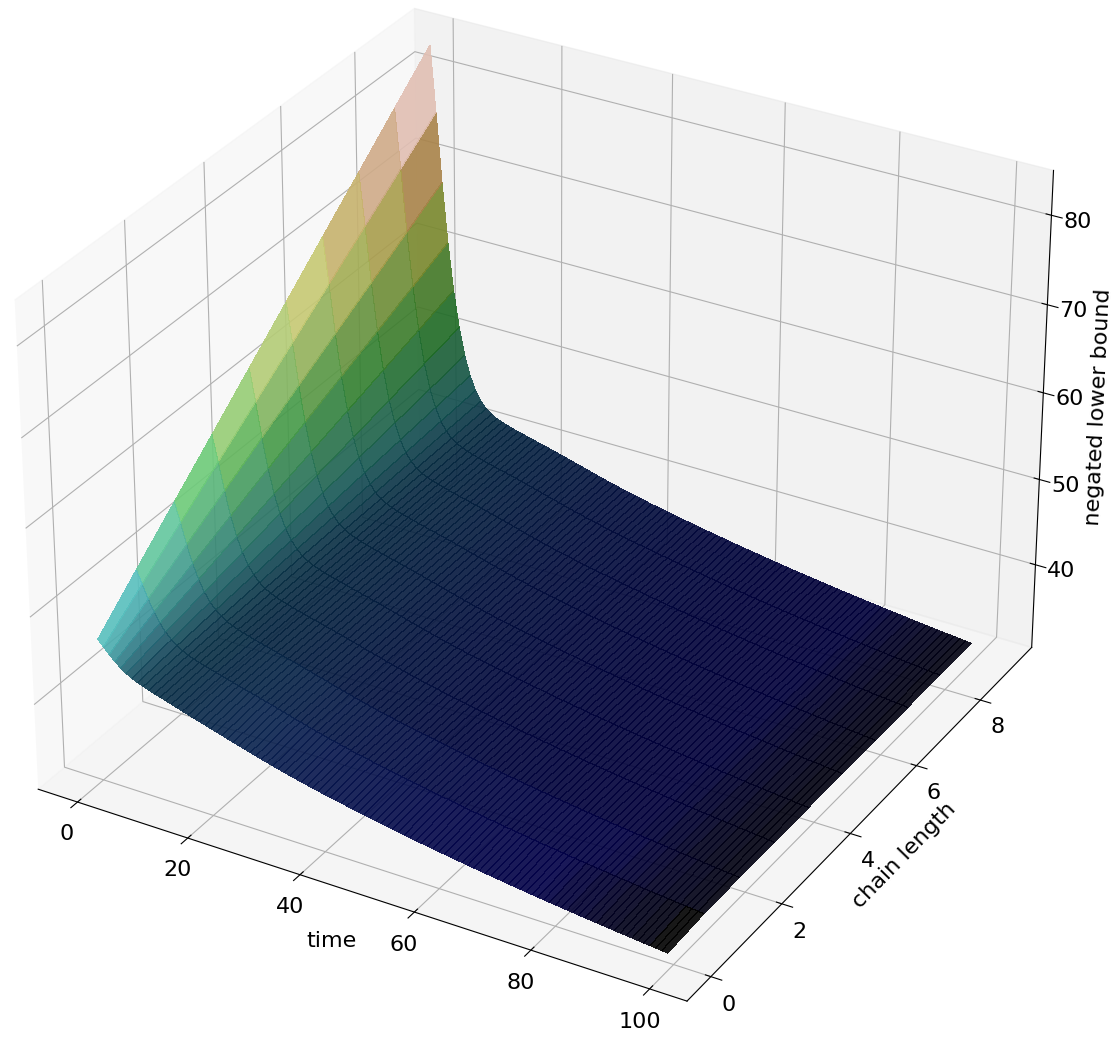}

        \vspace{-10pt}
        \includegraphics[width=0.43\textwidth]{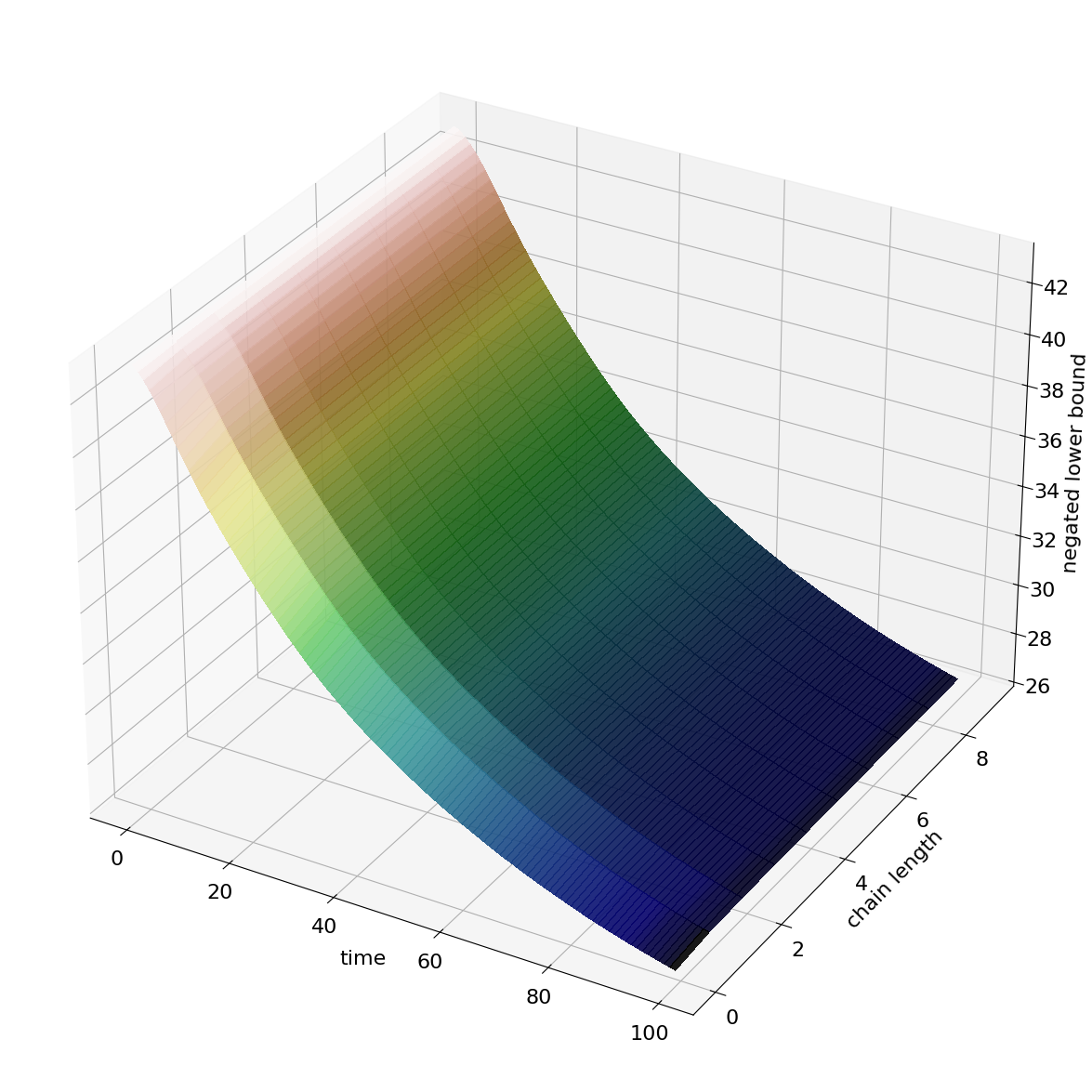}\hspace{10pt}
        \includegraphics[width=0.43\textwidth]{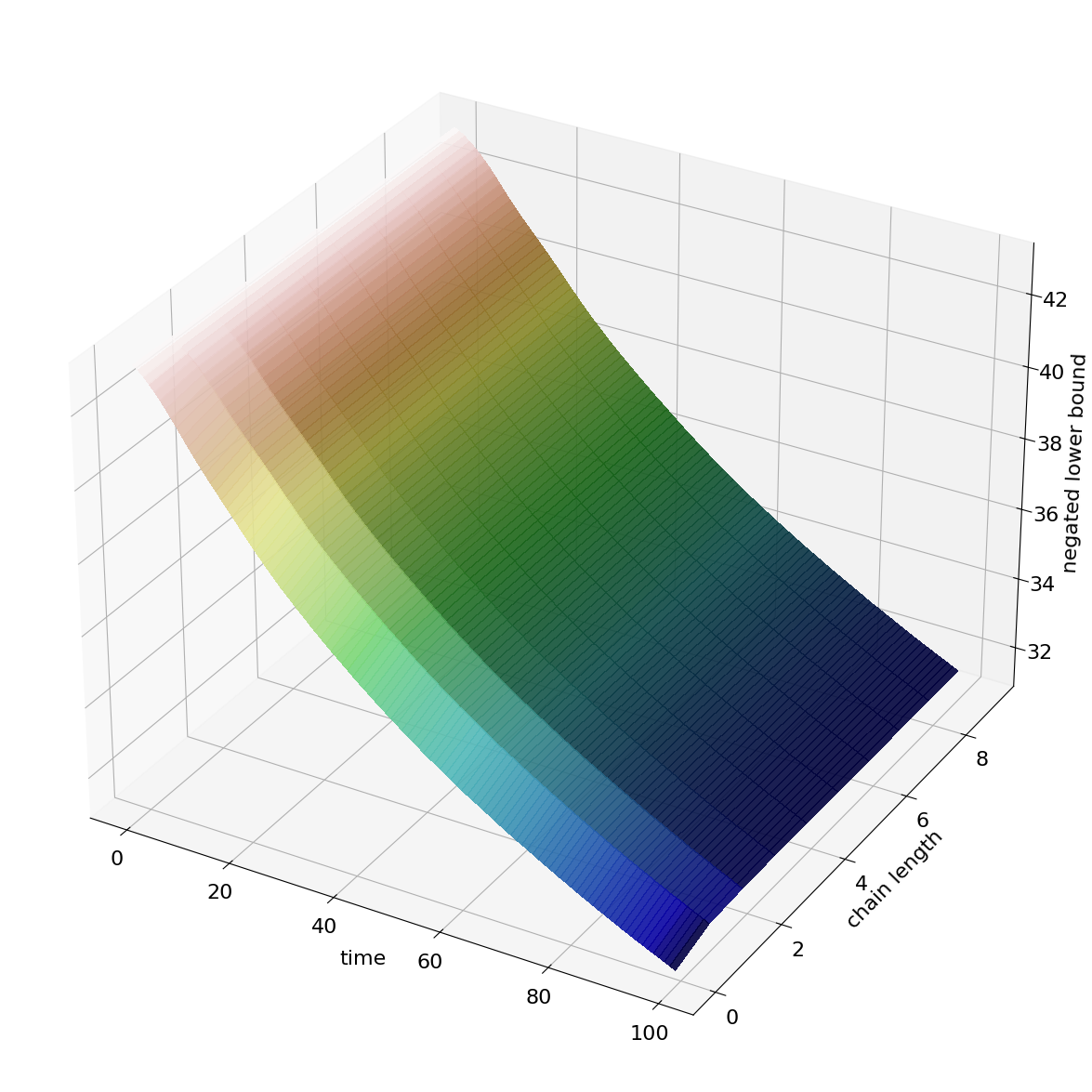}

        \vspace{-10pt}
        \includegraphics[width=0.43\textwidth]{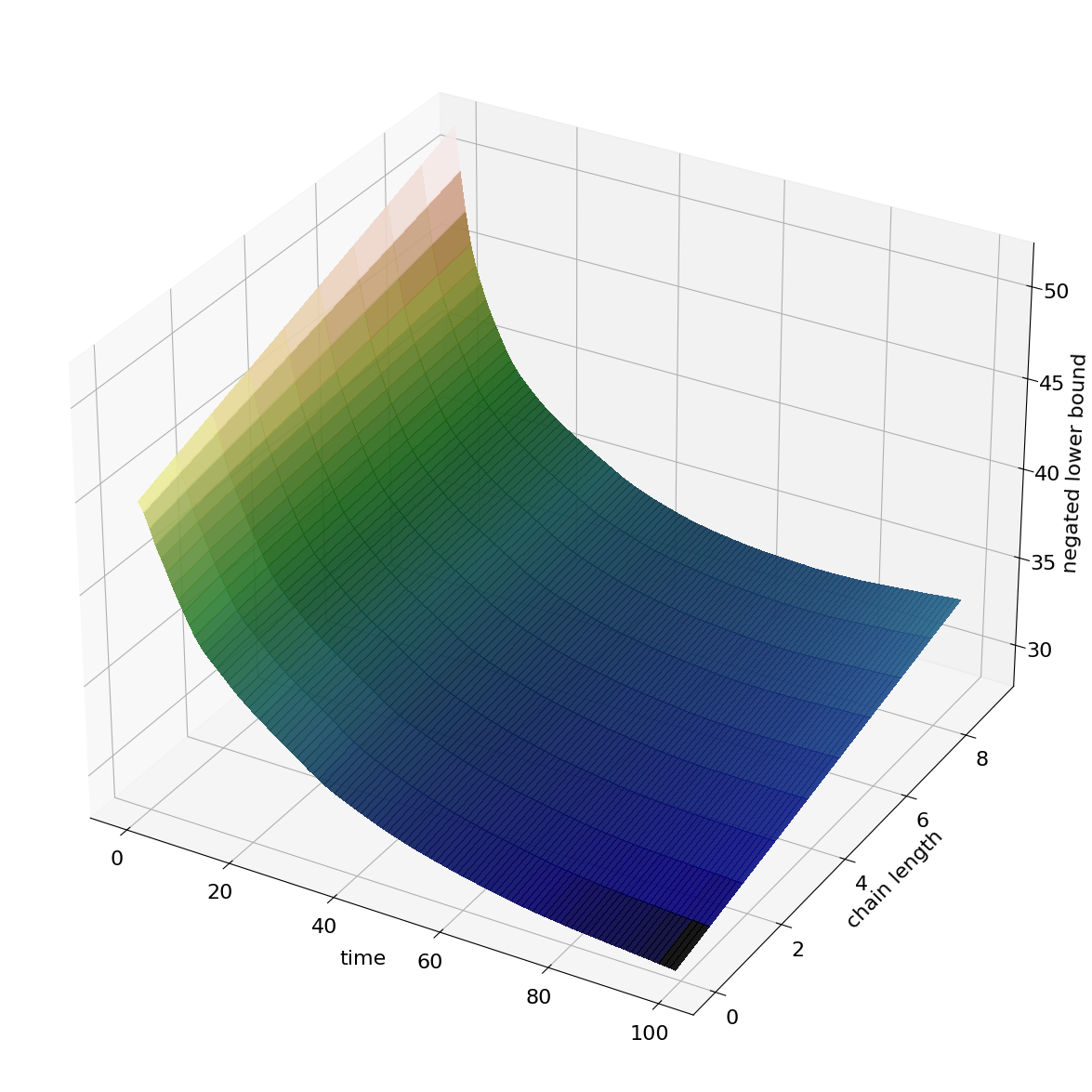}\hspace{10pt}
        \includegraphics[width=0.43\textwidth]{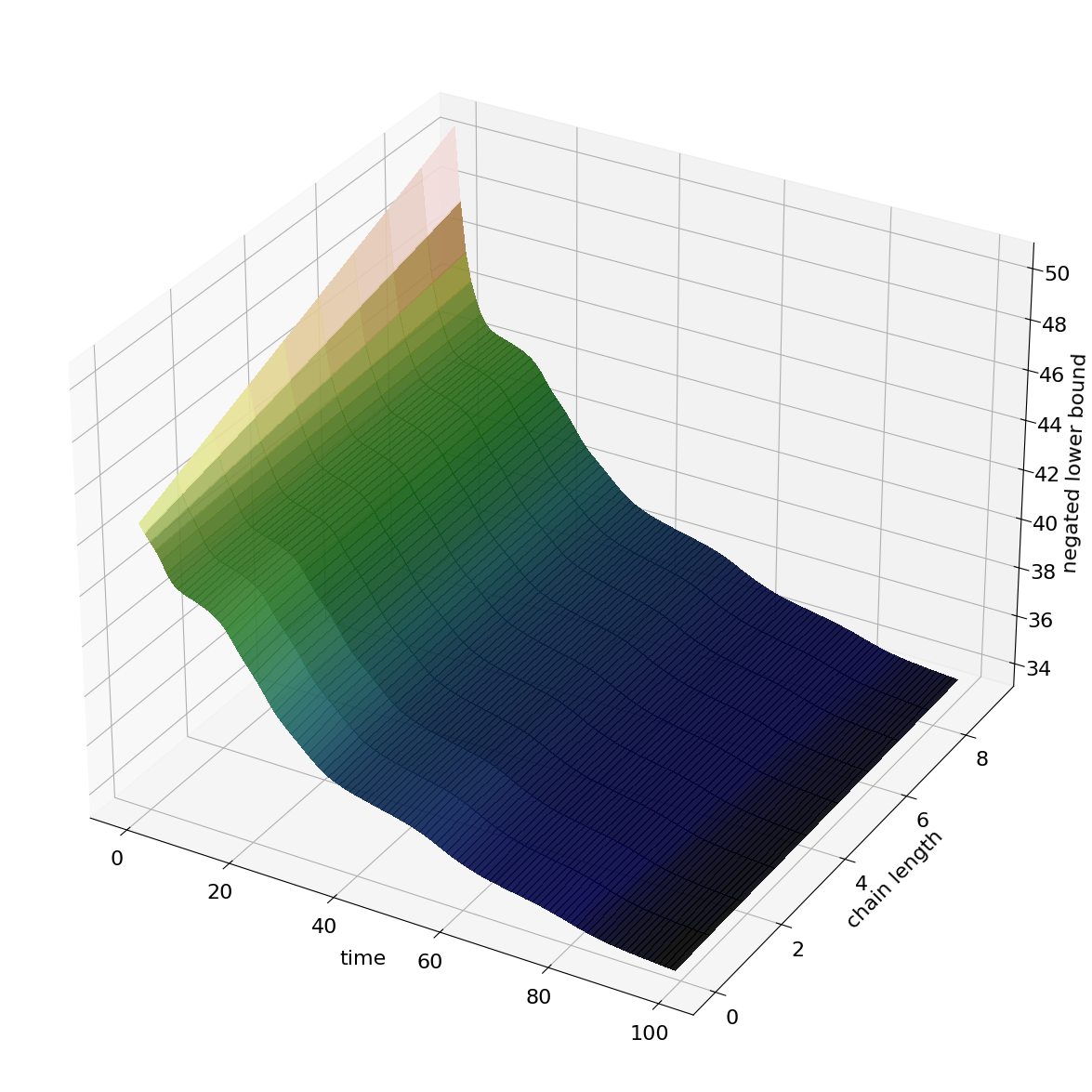}
    \end{center}
    \caption{\label{fig:app_art2} The dependencies of the optimisation objective on the chain length during the training. Top row: log-linear model, second row: symmetrised log-linear model, bottom row: MLP. Left: short optimisation chains, right: long optimisation chains.}
\end{figure*}

The results are presented in Fig.~\ref{fig:app_art2}. For both optimizing with $n=1$ and optimizing with $n=10$, we plot the values of the objective \eqref{hatF} for all $n$-s in dependency on the training time. The results confirm our expectations. When detailed balance is not explicitly enforced (experiments (1) and (3)), the objective grows linearly with $n$ at the beginning of the learning, \ie when the parameters are randomly initialised, and hence, the details balance is not satisfied. As conjectured, our algorithm implicitly enforces detailed balance, \ie the graphs are almost horizontal along the $n$-direction at the end of the learning process. Sometimes, if detailed balance is ``easy to achieve'' (the first experiment), short chains enforce detailed balance almost as good as long ones. In general, however, enforcing detailed balance requires longer chains (see experiment (3)). Finally, if detailed balance is satisfied by design (experiment (2)), the graphs are horizontal along the $n$-direction from the very beginning. In this case, short and long chains work equally good.

\subsection{Approximating the value of LB}\label{sec:LB-value}
A natural choice of approximating $L_B(\theta, q, n)$ would be just to drop the intractable  term $\sum_x q_{n+1}(x) \log_{\theta}(x)$, \ie, to approximate $L_B$ as
\begin{align}\label{eq:subject}
   & \hat F(\theta, q, n) =  n \sum_{x, x'\in\mathcal{X}} q(x' \cond x) Q_n(x) 
   \log \frac{p_\theta(x \cond x')}{q(x' \cond x)}.
\end{align}
However, even when the gradient of this omitted term asymptotically vanishes for $n \rightarrow \infty$, its value does not.
It is asymptotically the value of the entropy of $p_\theta(x)$. To approximate it a bit better we add back a tractable averaged conditional entropy of $q$, which results in the approximation 
\begin{equation}\label{eq:subject1}
    F(\theta, q, n) =  \sum_{x, x'\in\mathcal{X}} q(x' \cond x) q_n(x) 
    \log q(x' \cond x) + 
    n \sum_{x, x'\in\mathcal{X}} q(x' \cond x) Q_n(x) 
    \log \frac{p_\theta(x \cond x')}{q(x' \cond x)}
 \end{equation}
(see \eqref{hatF}). We can formally show the following.
\begin{proposition}\label{Prop:hatF}
For discrete variables, asymptotically for $n\rightarrow \infty$, $F$~\eqref{eq:subject1} is a better approximation of the lower bound \eqref{eq:chain} than $\hat F$~\eqref{eq:subject}.
\end{proposition}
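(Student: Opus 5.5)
We compare the two approximation errors, $L_B - \hat F$ and $L_B - F$, and identify their limits as $n\to\infty$. By \eqref{eq:chain}, $L_B(\theta,q,n) - \hat F(\theta,q,n) = \sum_x q_{n+1}(x)\log p_\theta(x)$ exactly. The other error is
\begin{equation*}
L_B - F = \sum_x q_{n+1}(x)\log p_\theta(x) - \sum_{x,x'} q(x'\cond x)q_n(x)\log q(x'\cond x).
\end{equation*}
The plan is to show that, asymptotically, the absolute value of the second expression is no larger than that of the first. Strict inequality should hold except in degenerate cases.

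The first step is to pass to the limit. We work, as in \cref{P:zero}, at the point where the limiting distribution of $q$ is $p_\theta$. The state space is finite, so $q_n\to p_\theta$ geometrically and all logarithms are bounded on the support. Hence
\begin{equation*}
L_B-\hat F \to -H(p_\theta), \qquad L_B - F \to -H(p_\theta) + H_q(X'\cond X),
\end{equation*}
where $H_q(X'\cond X)=-\sum_{x,x'}p_\theta(x)q(x'\cond x)\log q(x'\cond x)$ is the conditional entropy of one step of the stationary chain. Since $H(p_\theta)\ge 0$, it suffices to show
\begin{equation*}
0\le H_q(X'\cond X)\le 2H(p_\theta).
\end{equation*}
Then $|H_q(X'\cond X)-H(p_\theta)|\le H(p_\theta)$.

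The second step proves this inequality. Nonnegativity holds because the variables are discrete. For the upper bound, stationarity gives that $X'$ under the joint $p_\theta(x)q(x'\cond x)$ has marginal $p_\theta$. Since conditioning reduces entropy, $H_q(X'\cond X)\le H(X')=H(p_\theta)$. This already gives the stronger statement $0\le H_q(X'\cond X)\le H(p_\theta)$, and therefore
\begin{equation*}
|L_B-F|\to H(p_\theta)-H_q(X'\cond X)\le H(p_\theta) = \lim |L_B-\hat F|.
\end{equation*}
Equality holds only when $q$ is deterministic, i.e.\ has zero conditional entropy.

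The main obstacle is making the setting of the limit precise. We must fix $\theta$ (or $\theta=\bar\theta$) so that the limiting distribution of $q$ equals $p_\theta$, as in the paper's choice $q=p_{\theta^{(t)}}$. We must also ensure the limits of $\sum_x q_{n+1}\log p_\theta$ are finite, which needs $p_\theta>0$ wherever $q_n$ has mass. If $q$'s limit differs from $p_\theta$, the same argument yields a cross-entropy in place of $H(p_\theta)$. In that case I would restrict the claim to the stationary setting, and the discreteness assumption is used exactly for the nonnegativity of entropies.
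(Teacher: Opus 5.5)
Your proof is correct and follows essentially the same route as the paper. Your limit gap $F-L_B \to H(p_\theta)-H_q(X'\cond X)$ is exactly the mutual information $I$ that the paper identifies, and your two facts ($H_q(X'\cond X)\ge 0$ for discrete variables, and conditioning reduces entropy) are precisely the paper's $H-I\ge 0$ and $I\ge 0$, which give $L_B\le F\le \hat F$ asymptotically; passing directly to the stationary limit for any $q$ with stationary distribution $p_\theta$ slightly generalises the paper's choice $q=p_\theta$ and is a bit cleaner than its bookkeeping with $q_n$ inside the limits.
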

\begin{proof}
Consider $q(x'\cond x) = p_{\theta}(x'\cond x)$ and $\theta$ fixed. In the limit $n\rightarrow \infty$, $q_n(x)$ approaches $p_{\theta}(x)$ (see proof of~\cref{P:zero} above). Then, for $\hat F$, the approximation can be expressed as
\begin{align}
    \hat F(n) - L_B(n) \rightarrow -\sum_{x} q_n(x) \log q_n(x),
\end{align}
which is the entropy $H$ of $q_n(x)$. For discrete variables, $H$ is non-negative and therefore $\hat F$ is asymptotically an overestimate of $L_B$.

For $F$, we obtain
\begin{align}
    F(n) - L_B(n) \rightarrow \sum_{x,x'} q_n(x, x') \log \frac{q_n(x, x')}{q_n(x)q_n(x')},
\end{align}
which is the mutual information $I$ between $x$ and $x'$ drawn from $q_n(x, x') = q_n(x' \cond x) q_n(x)$. Since $I$ is non-negative, this shows that $F$ also asymptotically overestimates $L_B$.

There holds $H \geq I$ because their difference $H - I$ is the averaged conditional entropy, which is non-negative for discrete variables. It follows that asymptotically 
\begin{align}
L_B \leq F \leq \hat F
\end{align}
and hence $F$ is more accurate.

We also obtained the following interpretation of the approximation $F$: it is (asymptotically) accurate when the chain has weak coupling (mutual information between consecutive states is small), \ie when it mixes fast.
\end{proof}

\section{Network Architectures} \label{app:nets}

Here we describe network architectures used in our experiments. We focus on those used for CelebA in section \ref{subsec:celeba}. Models for MNIST and Fashion MNIST consist of very similar building blocks.

In CelebA, we have two types of networks. The first one, shown in Fig.~\ref{fig:nets} (left), gets both 1D and 2D inputs and produces a 1D output, for example $p(y\cond c, x, z, s)$. Thereby, several 1D inputs are concatenated, 2D inputs are concatenated along the channel direction. First, the 2D input goes through an encoder-like network, \ie a convolutional network of decreasing spatial resolution. The encoder consists of 9 convolutional layers. Some of them are performed with strides, effectively reducing spatial resolution. The output of the encoder is a feature vector, which is then concatenated with the 1D input. The resulting vector goes through an MLP consisting of 3 fully connected layers. The output values are interpreted according to the used output type, \ie for which component they are used. For attributes $c$ and binary latent variables $y$, the network outputs the corresponding logits. For continuous output $z$, it provides components means and logarithms of the variances.

The second network type, shown in Fig.~\ref{fig:nets} (right), gets again both 1D and 2D inputs, but produces a 2D output, \eg $p(s\cond c, x, y, z)$. It has a UNet-like architecture, \ie it consists of an encoder and a decoder, equipped with additional skip-connections. The encoder is of the same architecture as for the first network type, the decoder has the inverse architecture, \ie the same number of layers, hidden units, etc., using, however, transposed convolutions. The 1D input is concatenated to the UNet's hidden layer of the lowest, \ie $1{\times}1$, resolution.

We used the Adam optimiser with a gradient step size $1e{-}4$ in all our experiments. Further details can be found in the source code, which will be made available upon publication.

\begin{figure}[t]
    \begin{center}
        \includegraphics[width=0.4\textwidth]{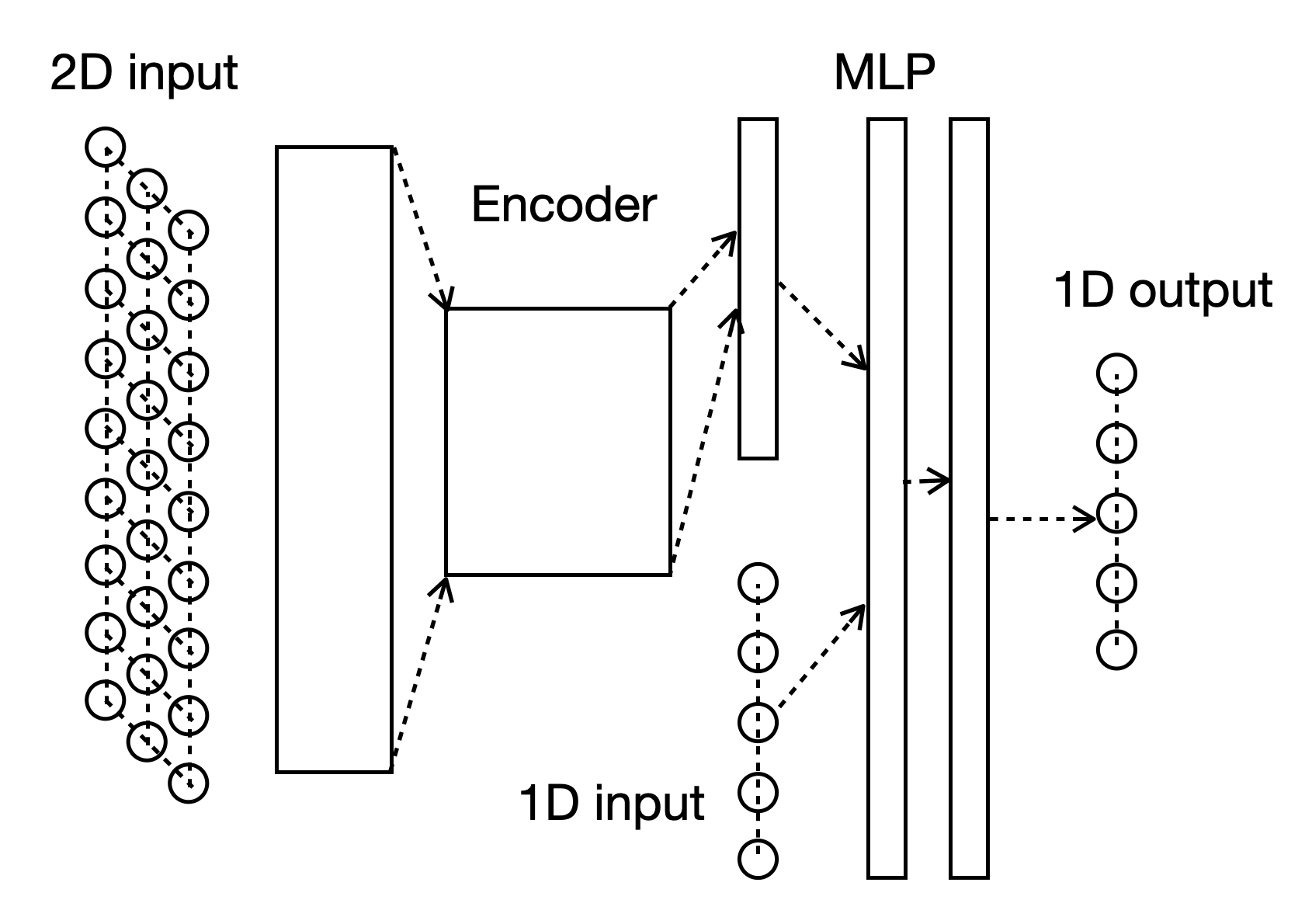}\hspace{2ex}
        \includegraphics[width=0.47\textwidth]{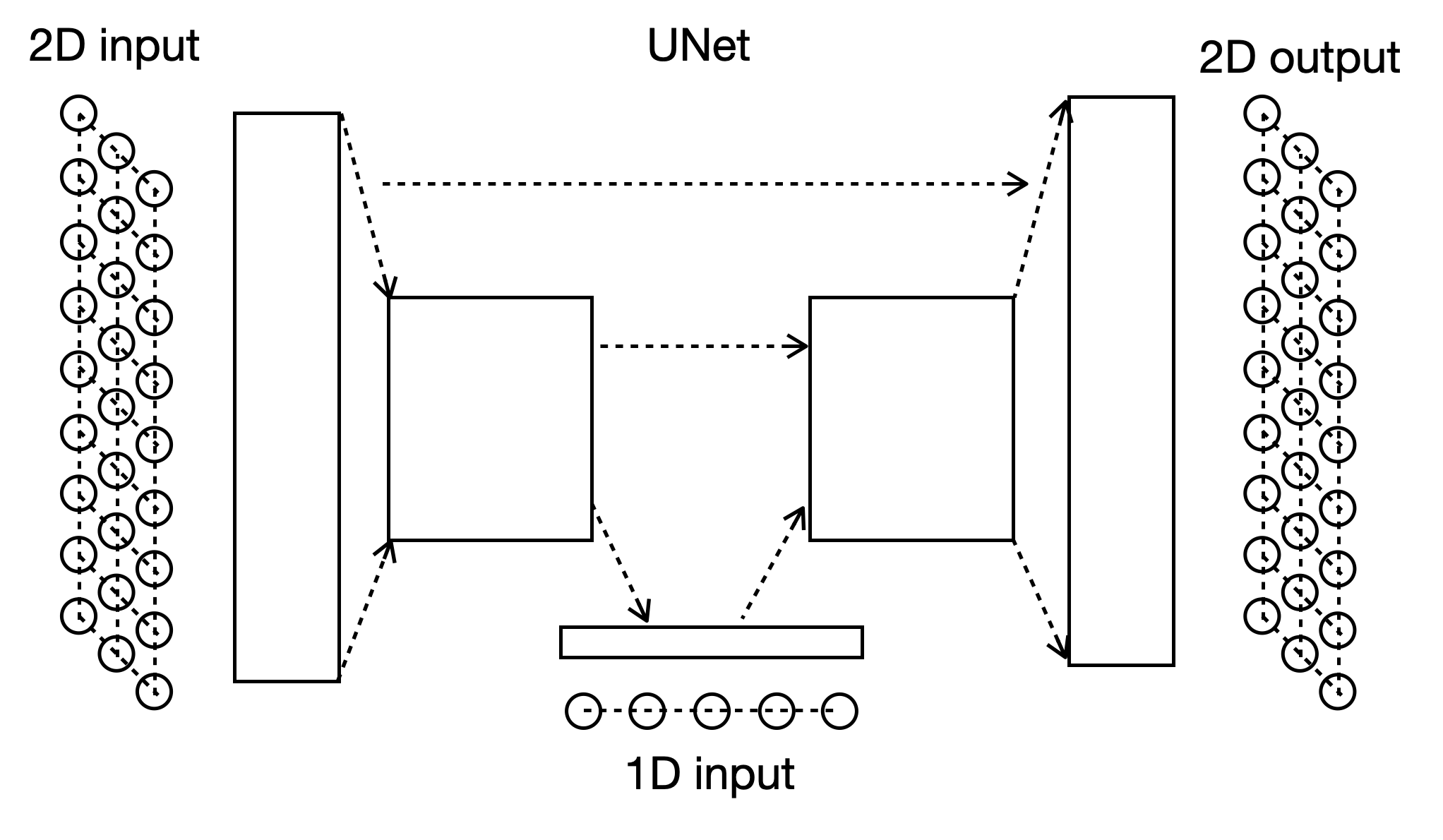}
    \end{center}
    \caption{\label{fig:nets} Network architectures. Left: for 1D outputs, right: for 2D outputs.}
\end{figure}

\section{Discussion and Limitations}\label{sec:discussion}

\subsection{Computational Time and Scalability}
{\bf Inference:} The inference in one of the directly modelled directions is naturally fast. For general inference problems, where MCMC sampling is required, we may speak, without loss of generality, about the time for unclamped generation because clamping on some data can be expected to make the mixing time only faster. The MCMC mixing time is critical in this case. It, however, depends on the model design and how it is learned. Currently, we do not have control of the mixing time and focus on the training methodology and proof-of-concept experiments. For qualitative results, we used 1000 MCMC steps. In this setting, generation with the CelebA model in ~\cref{subsec:celeba} is accomplished in under a second on NVIDIA TITAN Xp GPU (for one example or a batch of examples in parallel). For quantitative (FID) evaluation, we used 100 MCMC steps. When the generation is done in batches, generating 30K images took about half an hour. Overall, we used a conservatively high number of steps acceptable for us to perform the experiments. At the same time, we believe it might be possible to achieve models with both the desired limiting distribution and a fast mixing. In general, we consider the proposed limiting distribution model to be in the same computational complexity ``weight category'' as, e.g. diffusion models.

{\bf Learning:} Note that the proposed implementation of the algorithm blends learning and the recurrent lower bound expansion. Completion of new samples by MCMC in step 1 does not need backprop, and new samples are only needed for a fraction of the training batch at each training iteration, so the MCMC cost gets amortised. However, the current implementation performs completion and training sequentially, and we used only 10 sampling iterations for completion. The training time was then dominated by the blended training step (executed for a batch of data with backpropagation). We also used a conservatively small learning rate, an average chain length of 16, and a long training duration (1000 epochs). The learning of CelebA then took quite long indeed, about 4 days on a single NVIDIA H100 GPU. Learning simpler models, like we used for MNIST, takes several hours.

On the other hand, completing the samples can in principle run in parallel to training, possibly caching completed data so that it would not slow down training even if with many MCMC steps. The total computation cost of our approach, per blended learning step, can therefore be made linear in the total number of model parameters regardless of the number of groups of variables (and it is standardly linear in other meta-parameters such as batch size, etc.). The dimensionality of the data does not have a substantial impact. We thus propose that scalability is not a principled limitation for a moderate chain length $n$.

\subsection{Discussion of the Results in Table 1}\label{subsec:discussionTable1}




First, we note that the experimental setup is a direct comparison using exactly the same decoder and encoder architectures for all three methods, selected so that none of the methods fail due to their potential weaknesses (e.g. posterior collapse for VAE).

Compared to VAE and Symmetric in this setup, our approach lifts the assumption of a factorised (Bernoulli / Gaussian) prior. This potentially makes the model more flexible to learn a complex distribution of latent variables.
On the other hand, it enforces more consistency (as the analysis of the recurrent lower bound and the synthetic experiment show). Consistency can be rather restrictive in this setting with two conditionals factorising over output's coordinates. Note that the limitation to two groups of variables is necessary to enable a direct comparison.
For FMNIST, this does not appear to be too restrictive and we observe a better quality of the limiting distribution than the baselines using single-sample or limiting. For CelebA, while our more advanced model in~\cref{subsec:celeba} can circumvent these restrictions by using more groups of variables, the two-groups model here can not. We observe that enforcing consistency in this case is at the expense of the modelling power and the basic VAE in this case is better and faster for the generation task. At the same time, consistency is desired for other possible inference tasks. The ``limiting'' evaluation in~\cref{fig:compare} can be viewed as a surrogate of such tasks as it depends on both the model fit and consistency. The evidence ``Ours $>$ Symmetric $>$ VAE'' in the limiting evaluation for both MNIST and CelebA supports a better utility for general inference tasks.

We further argue that consistence is more important for more complex models with several groups of variables (e.g. images, segmentations, and attributes). If the inference task is complex (e.g. we want to infer several groups of variables) and not known in advance or there is no fully labelled data for that task, consistency is essential. The proposed approach can control (at least in theory) the trade-off between the data likelihood and consistency by varying the chain length.





\subsection{Chain length}\label{sec:chainlength}
As we have discussed, the chain length $n$ is a meta-parameter of the learning algorithm that controls the trade-off between the data likelihood and the consistency of the learned conditionals. A longer chain length enforces more consistency. It also affects the quality of the approximation of the gradient of the data likelihood. A longer chain length provides a better approximation. It is therefore desirable to better understand its impact in dependence of the data complexity, the expressive power of the chosen model class as well as of the subset of models with detailed balance.

In our experiments, we used a fixed chain length during the whole training. We have not yet experimented with varying it during the training. It might indeed be desirable to start the learning with short (expected) chain lengths and then gradually increase the length when progressing with learning. A possible argument is that even an approximation of the gradient of $L_B(\theta, g, n)$ for small $n$ is sufficient to improve the data likelihood at the beginning of the training, when the current model is still far from the optimal one. On the other hand, if it is already close, then a better approximation of the gradient is needed, which is achieved by a longer chain length.






\end{document}